\documentclass{article}

\usepackage{microtype}
\usepackage{graphicx}
\usepackage{subcaption}
\usepackage{booktabs} % for professional tables

\usepackage{hyperref}

\PassOptionsToPackage{table}{xcolor}
\usepackage[preprint]{icml2026}
\usepackage{multirow}    % 必须：处理表格跨行合并
\usepackage{adjustbox}   % 必须：当表格或图片太宽时，用 \begin{adjustbox}{width=\columnwidth} 强行缩放
\usepackage{colortbl}   % 可选：给表格行/列上色

\usepackage{algorithm}
\usepackage{algorithmic}
\definecolor{cvprblue}{rgb}{0.21,0.49,0.74}
\hypersetup{
    colorlinks=true,
    linkcolor=cvprblue,
    citecolor=cvprblue,
    urlcolor=cvprblue,
}
\definecolor{categorybg}{RGB}{248, 245, 255} % 极浅的丁香紫，比灰色更有活力
\definecolor{highlightblue}{RGB}{220, 242, 250} %
\usepackage{amsmath}
\usepackage{amssymb}
\usepackage{mathtools}
\usepackage{amsthm}

\usepackage[capitalize,noabbrev]{cleveref}

\theoremstyle{plain}
\newtheorem{theorem}{Theorem}[section]
\newtheorem{proposition}[theorem]{Proposition}
\newtheorem{lemma}[theorem]{Lemma}
\newtheorem{corollary}[theorem]{Corollary}
\theoremstyle{definition}
\newtheorem{definition}[theorem]{Definition}

\theoremstyle{remark}

\usepackage[textsize=tiny]{todonotes}

\icmltitlerunning{Process-of-Thought Reasoning for Videos}

\begin{document}

\twocolumn[
  \icmltitle{Process-of-Thought Reasoning for Videos}
  \icmlsetsymbol{equal}{*}

  \begin{icmlauthorlist}
    \icmlauthor{Jusheng Zhang}{sysu,ntu}
    \icmlauthor{Kaitong Cai}{sysu}
    \icmlauthor{Jian Wang}{Snap} \\
    \icmlauthor{Yongsen Zheng}{ntu}
    \icmlauthor{Kwok-Yan Lam}{ntu}
    \icmlauthor{Keze Wang}{sysu}
  \end{icmlauthorlist}
  \icmlaffiliation{sysu}{Sun Yat-sen University, China}
  \icmlaffiliation{ntu}{Nanyang Technological University, Singapore}
  \icmlaffiliation{Snap}{Snap Inc.}

  \icmlcorrespondingauthor{Keze Wang}{kezewang@gmail.com}

  \icmlkeywords{Machine Learning, ICML, Rational ANOVA Networks}

  \vskip 0.3in
]

\printAffiliationsAndNotice{}  % no special 
\begin{abstract}
State-of-the-art video-to-text models often fail to capture causal dynamics, yielding fragmented descriptions due to \textbf{process blindness}—the inability to comprehend the underlying narrative process ($\mathcal{P}$) connecting discrete temporal states ($S_{t_i}$). While prior neuro-symbolic approaches treat temporal logic as static, post-hoc supervision, we propose \textbf{LogicAgent}, a framework that elevates temporal logic to a learnable process variable. LogicAgent integrates three core components: a \texttt{Grounded Eventifier} for perceptual abstraction, a \texttt{Discrete CoT Generator} for symbolic chain composition ($\mathcal{C}$), and a \texttt{Hybrid Differentiable Logic Verifier}. Our central innovation is a \textbf{functional optimization paradigm} that treats $\mathcal{C}$ as a differentiable latent variable, enabling the joint optimization of predictive utility ($\mathcal{L}_{\text{pred}}$), temporal-logical consistency ($\mathcal{L}_{\text{LTL}}$), counterfactual robustness ($\mathcal{L}_{\text{CF}}$), and sparsity ($\mathcal{L}_{\text{spar}}$). By unifying symbolic reasoning with gradient-based learning, LogicAgent achieves a fundamental shift from state perception to process-level reasoning, demonstrating superior performance on complex temporal understanding benchmarks.
\end{abstract}
\section{Introduction}
\label{sec:intro}

Video-to-text generation has achieved remarkable progress, demonstrating impressive capabilities in recognizing objects and actions~\cite{Clip,nguyen-etal-2024-video,gu2023textknowledgegraphaugmented}. While promising for applications ranging from automated surveillance to assistive technologies, a critical gap persists between current model capabilities and human-level comprehension~\cite{9879080,10982110}. Specifically, while existing methods effectively describe \textit{what} appears in a video, they fundamentally struggle to articulate \textit{how} or \textit{why} events unfold over time.

Our analysis suggests that this gap stems from \textbf{process blindness}: the inability to comprehend videos as continuous and logically structured sequences of events. Consider a video of a soccer player scoring a goal. A process-blind model often suffers from \textbf{narrative fragmentation}, generating disconnected observations (e.g., \textit{``A player stands. The net moves.''}) while missing the central causal link. More critically, it leads to \textbf{logical incoherence}~\cite{liu-wan-2023-models,zs1}, producing descriptions that violate basic causality, such as \textit{``The ball is in the net, and then the player kicks it.''}

To rigorously diagnose this limitation, we formalize the prevailing paradigms in video understanding.
\textbf{The Standard Paradigm: Implicit Mapping with Static Constraints.} 
Most existing approaches formulate video-to-text generation as learning a direct mapping to maximize the likelihood of the text $Y$ given video $V$. In neuro-symbolic variants, logical consistency is typically treated as a static regularization term or a post-hoc constraint~\cite{10982110,liu-wan-2023-models}. Mathematically, this optimization searches for parameters $\theta^*$ that minimize a task loss subject to a fixed logical penalty $\Omega$:
$    \theta^* = \operatorname*{argmin}_{\theta} \mathbb{E}_{(V,Y)\sim\mathcal{D}} \left[ \mathcal{L}_{\text{task}}(Y, \mathcal{F}_\theta(V)) + \lambda \cdot \underbrace{\Omega_{\text{logic}}(\operatorname{sg}[\hat{Y}])}_{\text{Static / Post-hoc}} \right]
    \label{eq:standard_paradigm}
$\\
where $\operatorname{sg}[\cdot]$ denotes the stop-gradient operation. Under this formulation, reasoning is \textit{detached}: the logical penalty $\Omega$ cannot backpropagate informative gradients to update the perceptual encoder due to the discrete nature of symbolic rules. Consequently, the model suffers from \textbf{gradient isolation}, failing to learn visual features that are causally significant.
\begin{figure*}[t]
    \centering
    \includegraphics[width=\textwidth]{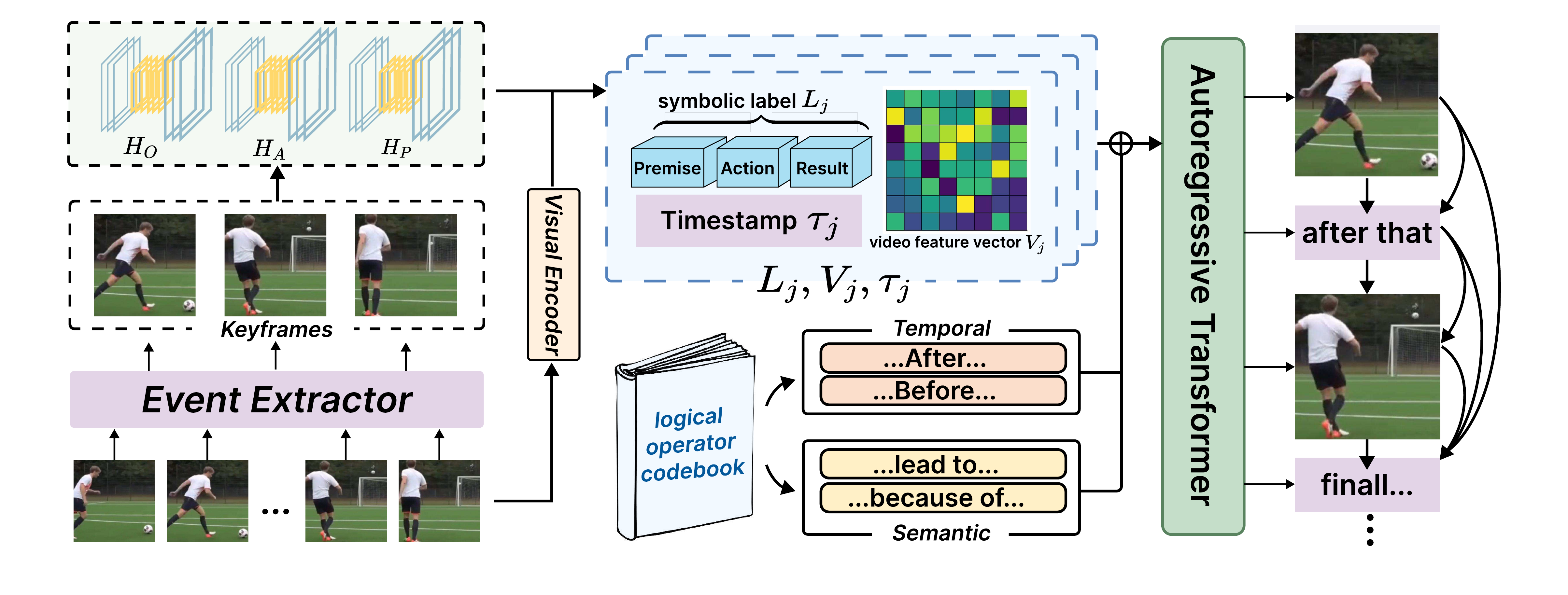}
    \caption{\textbf{The LogicAgent Pipeline.} Unlike black-box models, we explicitly model the narrative process. (1) \textbf{Eventifier:} Lifts continuous features $V$ into grounded events $\mathcal{E}$; (2) \textbf{CoT Generator:} Synthesizes symbolic causal chains $\mathcal{C}$; (3) \textbf{Hybrid Verifier:} Scores chains to guide learning. This verifiable structure prevents gradient isolation and ensures logical consistency.}
    \label{fig:flow}
\end{figure*}

\textbf{Our Paradigm: Functional Optimization with Latent Logic Chains.} 
To overcome this gradient isolation, we propose \textbf{LogicAgent}, which redefines video understanding as a process of learning an explicit, differentiable reasoning policy. We introduce the reasoning chain $\mathcal{C}$ not as a static constraint, but as a \textbf{functional latent variable} mediated by a policy $\pi_\theta(\mathcal{C}|V)$. Unlike Eq.~\ref{eq:standard_paradigm}, our objective transforms into maximizing the expected joint utility of the reasoning process itself:
$    \mathcal{J}(\theta) = \mathbb{E}_{\mathcal{C} \sim \pi_\theta(\cdot|\mathcal{E}(V))} \left[ \underbrace{\mathcal{L}_{\text{pred}}(\mathcal{C})}_{\text{Predictive}} + \underbrace{\mathcal{L}_{\text{LTL}}(\mathcal{C}, V)}_{\text{Logical}} + \underbrace{\mathcal{L}_{\text{CF}}(\mathcal{C})}_{\text{Robustness}} \right]
    \label{eq:logicagent_paradigm}
$Here, the logic chain $\mathcal{C}$ serves as a differentiable computational bridge. By employing gradient estimators, the logical signals from $\mathcal{L}_{\text{LTL}}$ and $\mathcal{L}_{\text{CF}}$ are directly backpropagated to the perceptual ``\texttt{Eventifier}'' $\mathcal{E}(V)$. This paradigm shift—from \textit{static constraints} to \textit{functional optimization}—ensures that logical structure is not just verified, but actively learned from perceptual data.

To implement this paradigm, LogicAgent (as illustrated in Fig.~\ref{fig:flow}) integrates three core modules: 
(1) a \textbf{Grounded Eventifier} (Sec.~\ref{sec:modules}) that lifts the video stream into perceptually grounded event units; 
(2) a \textbf{Discrete CoT Generator} (Sec.~\ref{sec:modules}) that composes a symbolic reasoning chain $\mathcal{C}$ from a learned codebook; 
and (3) a \textbf{Hybrid Differentiable Logic Verifier} (Sec.~\ref{sec:objectives}), which validates the chain's temporal logic against visual content.

Our contributions are summarized as follows.
\textbf{First, we diagnose `process blindness'} as a consequence of the gradient isolation in standard static paradigms, formally analyzing the reasoning dynamics (Sec.~\ref{sec:reasoning_dynamics}). We propose a shift towards functional optimization where logic is treated as a latent variable.
\textbf{Second, we propose LogicAgent} (Sec.~\ref{sec:method}), a neuro-symbolic framework that explicitly models the narrative process by integrating grounded event extraction with symbolic reasoning.
\textbf{Third, we introduce a functional optimization objective} that treats the symbolic chain $\mathcal{C}$ as differentiable. By jointly optimizing for predictive utility ($\mathcal{L}_{\text{pred}}$), logical consistency ($\mathcal{L}_{\text{logic}}$), counterfactual robustness ($\mathcal{L}_{\text{CF}}$), and sparsity ($\mathcal{L}_{\text{spar}}$), we enforce explicit data-driven logic, promoting consistency in video understanding.
\section{Problem Formulation: Verifiable Video Reasoning Dynamics}
\label{sec:problem}
% =========================================================
\paragraph{Motivation \& Task.}
Standard video captioning often suffers from \textit{process blindness}—generating descriptions that violate temporal logic due to the lack of explicit reasoning constraints. To bridge this gap, we formulate video-to-text generation as a dual process of grounded event abstraction and \textbf{verifiable reasoning}.
Let $\mathcal{D}=\{(V_i,Y_i)\}_{i=1}^{N_{\mathcal{D}}}$ be the dataset. A visual encoder maps video $V$ to features $S_{1:T}$. Our goal is to generate captions $Y$ conditioned on a latent, logically valid chain $\mathcal{C}$.
\begin{figure*}[t]
    \centering
    % 请确保把 'IMG/reasoning_dynamics.pdf' 换成你实际的图片文件名
    \includegraphics[width=\textwidth]{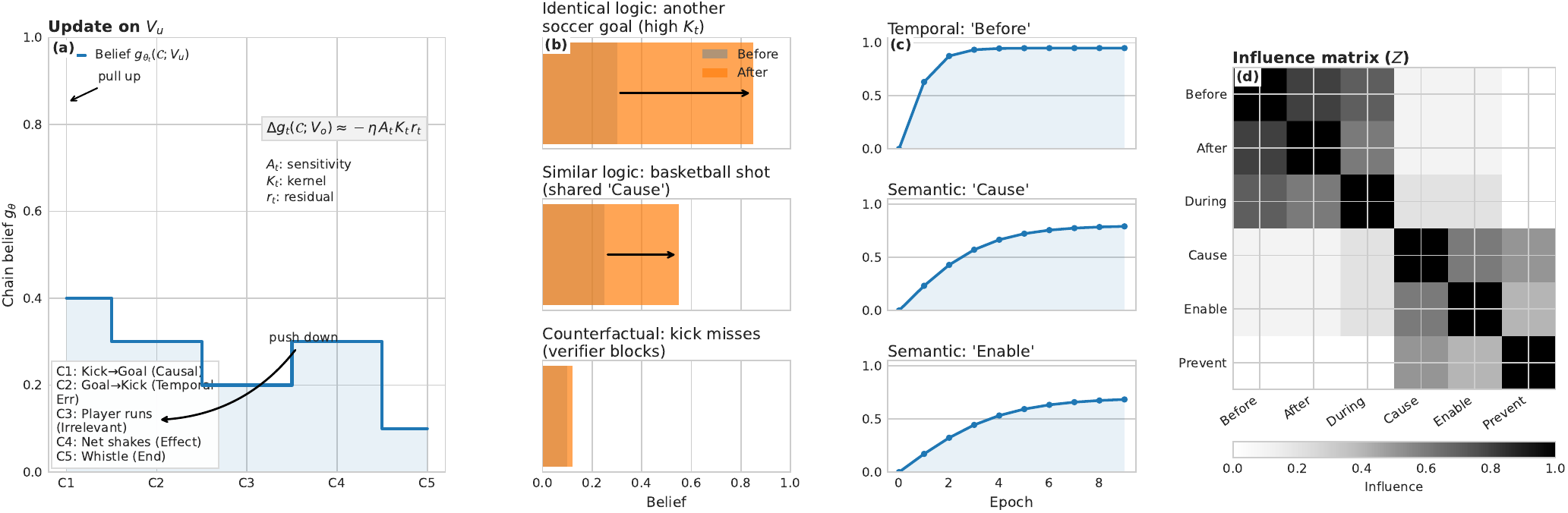}
\caption{\textbf{Reasoning Dynamics.} 
(a) \textbf{Update:} $r_t$ boosts causal belief (C1) vs. errors (C2). 
(b) \textbf{Kernel:} $K_t$ transfers semantics but blocks counterfactuals. 
(c) \textbf{Convergence:} Temporal ops learn faster than semantic. 
(d) \textbf{Matrix:} Shows logical disentanglement.}    \label{fig:dynamics}
\end{figure*}
\paragraph{Grounded Event Space.}
Logical operators cannot operate on raw pixels. We first define an \textbf{Eventifier} $\mathcal{F}_E$ that lifts the continuous stream into discrete, checkable units:
\begin{equation}
    \mathcal{E}(V)=\mathcal{F}_E(S_{1:T})=\{E_j\}_{j=1}^{K}, \quad E_j=(\ell_j,v_j,\tau_j).
    \label{eq:event_set_problem}
\end{equation}
Here, $\ell_j$ is a symbolic label (instantiated as a structured triplet $\ell_j=\langle P_j,A_j,O_j\rangle$ from a predefined vocabulary), $v_j$ is the pooled visual feature of the corresponding spatio-temporal segment, and $\tau_j$ denotes its temporal support. This binding $(\ell_j,v_j,\tau_j)$ serves as the crucial bridge allowing symbolic operators to be verified directly against video evidence.

\paragraph{Chain Space \& Operator Codebook.}
We define a codebook $Z=Z_T \cup Z_S$ containing temporal ($Z_T$, e.g., \texttt{before}) and semantic ($Z_S$, e.g., \texttt{cause}) operators. A reasoning chain $\mathcal{C}$ is formalized as a list of logical transitions (edges):
\begin{equation}
    \mathcal{C}=\{(a_\ell,z_\ell,b_\ell)\}_{\ell=1}^{L},\qquad
    a_\ell,b_\ell\in[K],\; z_\ell\in Z,
    \label{eq:chain_problem}
\end{equation}
where $L$ denotes the number of edges. We introduce a \textbf{Hybrid Verifier} $\mathcal{F}_V$ that assigns a belief score $s_{\theta}(\mathcal{C},V) \in (0,1]$ to quantify how well the video evidence supports this hypothesis.

\paragraph{Optimization Objective.}
The training balances captioning quality with logical validity. We select the optimal chain $\mathcal{C}^*(V)$ by minimizing an auxiliary verification loss $\mathcal{L}_{\mathrm{aux}}$ (using a negative sample $V^-$, detailed in Sec.~\ref{sec:objectives}). The joint objective is:
\begin{equation}
    \mathcal{L}(\theta;V,Y) = \mathcal{L}_{\mathrm{caption}}(\theta;V,Y) + \mathcal{L}_{\mathrm{aux}}\big(\mathcal{C}^*(V);V,V^-\big).
    \label{eq:full_loss_problem}
\end{equation}

% ---------------------------------------------------------
\subsection{Theoretical Analysis: Reasoning Dynamics}
\label{sec:reasoning_dynamics}
% ---------------------------------------------------------

\paragraph{The Core Question.}
Why does explicit verification lead to better generalization? We analyze the \textit{Reasoning Dynamics} (visualized in Fig.~\ref{fig:dynamics}):
\textit{After one gradient update on a training video $V_u$, how does the belief in a chain $\mathcal{C}$ for a test video $V_o$ change?}
We analyze the learning dynamics w.r.t.\ $\theta$ while treating the discrete generator $\phi$ as fixed during a single step. Let $\theta_t$ be parameters at step $t$. A gradient step on $(V_u,Y_u)$ yields $\Delta\theta_t = -\eta \nabla_{\theta}\mathcal{L}_t(V_u)$. The belief change is $\Delta g_t(\mathcal{C};V_o) \approx -\eta \langle \nabla_{\theta}g_{\theta_t}(\mathcal{C};V_o), \nabla_{\theta}\mathcal{L}_t(V_u) \rangle$.

\paragraph{Influence Decomposition.}
By factorizing the verifier score over edges, we derive the \textbf{Influence Decomposition} (using the aligned form for stability):
\begin{equation}
\resizebox{0.9\columnwidth}{!}{%
$ % 开启内部数学模式
\begin{aligned}
    \Delta g_t(\mathcal{C};V_o)\approx -\eta\,&\underbrace{A_t(\mathcal{C},V_o)^{\top}}_{\text{\scriptsize Sens.}}\,\underbrace{K_t\big((\mathcal{C},V_o),(V_u,Y_u)\big)}_{\text{\scriptsize Kernel}}\,\underbrace{r_t(V_u,Y_u)}_{\text{\scriptsize Resid.}}
\end{aligned}
$
}
\label{eq:decomp_problem}
\end{equation}
This equation tells a compelling story about how LogicAgent avoids shortcut learning:
\textbf{Sensitivity $A_t$ (Where to learn):} Measures logical uncertainty ($1-\sigma$). Updates naturally focus on edges where the verifier is currently unsatisfied.
\textbf{Reasoning Kernel $K_t$ (How to transfer):} This is the core mechanism. It ensures that knowledge transfers from $V_u$ to $V_o$ only if they share the same \textit{causal structure} (grounded evidence), rather than just superficial visual similarity.
\textbf{Residual $r_t$ (What to learn):} Unlike standard captioning losses that may encourage hallucinations, our $\mathcal{L}_{\mathrm{aux}}$ steers the residual $r_t$ towards logical consistency.

\paragraph{Toy Example: Kick $\rightarrow$ Goal.}
Consider events $E_1=(\ell_1, v_1, \tau_1)$ with $\ell_1=\texttt{kick}$ and $E_2=(\ell_2, v_2, \tau_2)$ with $\ell_2=\texttt{goal}$. A standard model might learn to associate "goal" simply with "green grass". However, by training with a counterfactual $V^-$ (e.g., \textit{kick misses}), LogicAgent forces the verifier to recognize that $\texttt{cause}$ requires specific visual preconditions. This refines the Kernel $K_t$, enabling the model to correctly reason about new, unseen scenarios.
% =========================================================
\section{Methodology}
\label{sec:method}
% =========================================================

\begin{figure*}[t]
    \centering
    \includegraphics[width=\textwidth]{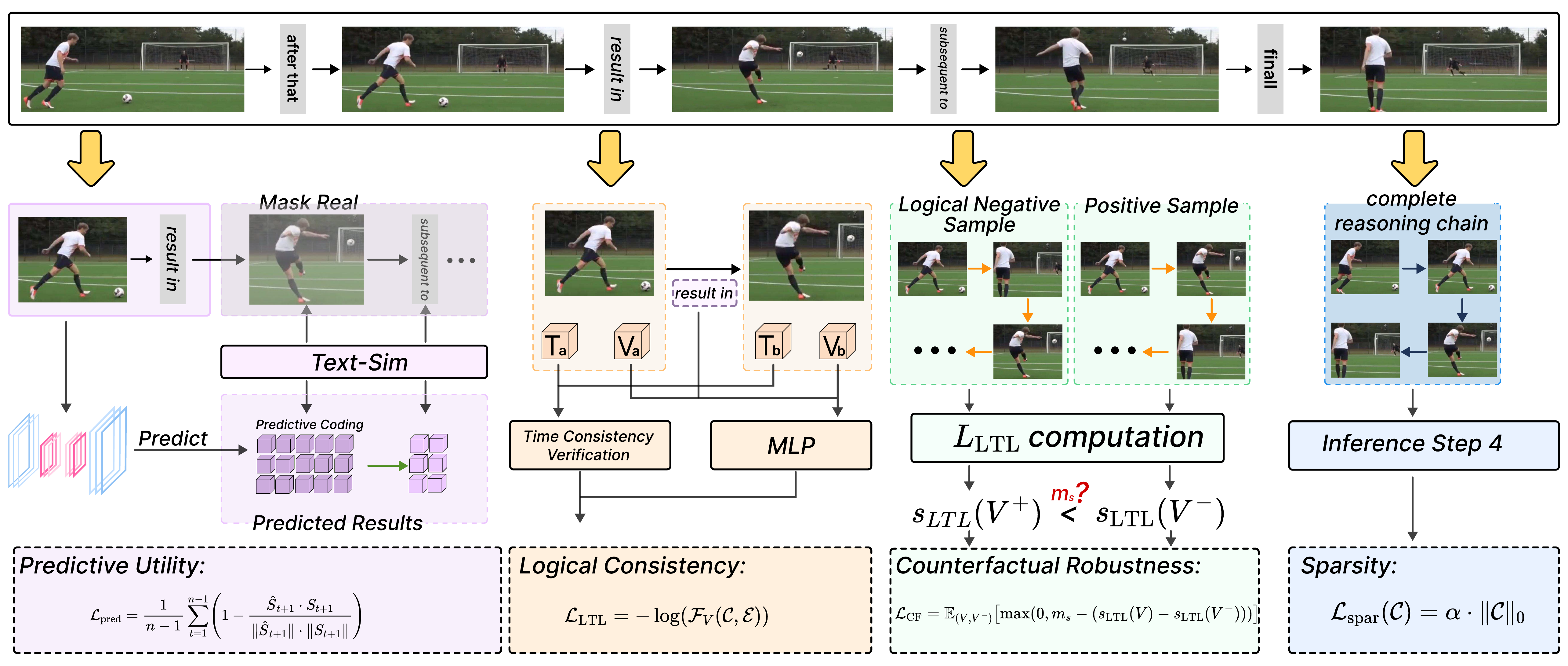}
\caption{\textbf{Functional Verification.} We transform the discrete chain into a differentiable variable via four objectives: (1) Predictive Utility, (2) Logical Consistency, (3) Counterfactual Robustness, and (4) Sparsity, ensuring causal structure learning.}    \label{fig:loss}
\end{figure*}

\paragraph{Motivation: Representation Mismatch.}
Current video models often fail in logic because they attempt to capture narrative processes using continuous "black-box" vectors, whereas real-world reasoning is discrete and causal. To bridge this gap, we propose \textbf{LogicAgent}. As illustrated in Figure~\ref{fig:flow}, our framework consists of three core modules optimized via the reasoning dynamics formalized in Sec.~\ref{sec:problem}.

\subsection{Eventifier \& Generator: Grounding and Reasoning}
\label{sec:modules}

\paragraph{The Eventifier.}
The Eventifier serves as the grounding interface. As formally defined in Eq.~\eqref{eq:event_set_problem} (Sec.~\ref{sec:problem}), it maps continuous features $S_{1:T}$ to discrete events $\mathcal{E}(V)$. In practice, we implement this via a proposal-then-classify architecture: it first identifies salient temporal segments, then applies lightweight classification heads ($H_P, H_A, H_O$) to predict structured symbolic labels $\ell_j=\langle \text{Premise}, \text{Action}, \text{Object} \rangle$.
Crucially, strictly retaining the pooled visual evidence $v_j$ and temporal support $\tau_j$ is vital for our design. Unlike methods that verify against text only, passing $v_j$ allows the semantic verifier checks logic against \textit{pixel-level evidence}, avoiding information loss from noisy captions.

\paragraph{The Discrete CoT Generator.}
The Generator ($\mathcal{F}_G$, parameterized by $\phi$) navigates this grounded event space. It models the narrative process as a conditional distribution $P_{\phi}(\mathcal{C} \mid \mathcal{E}(V))$. To synthesize a chain, the model operates autoregressively, alternating between selecting event indices $a_i, b_i \in [K]$ and choosing logical operators $z_i$ from the codebook $Z$.
During training, we sample $N$ diverse candidate chains $\{\mathcal{C}^{(n)}\}_{n=1}^N$. This sampling strategy is essential: instead of committing to a single path, it allows the model to explore multiple \textit{narrative hypotheses}, from which the Verifier (Sec.~\ref{sec:objectives}) selects the most evidence-consistent explanation.

\subsection{Functional Optimization: Verifying the Chain}
\label{sec:objectives}
To shape the Reasoning Dynamics (Eq.~\eqref{eq:decomp_problem}), we design the auxiliary loss $\mathcal{L}_{\mathrm{aux}}$ (as illustrated in Fig.~\ref{fig:loss}) with four specific roles:
\begin{equation}
\begin{aligned}
    \mathcal{L}_{\mathrm{aux}}(\mathcal{C};V,V^-)=&\ \lambda_{\mathrm{pred}}\mathcal{L}_{\mathrm{pred}}
    +\lambda_{\mathrm{logic}}\mathcal{L}_{\mathrm{logic}}\\&+\lambda_{\mathrm{CF}}\mathcal{L}_{\mathrm{CF}}
    +\lambda_{\mathrm{spar}}\mathcal{L}_{\mathrm{spar}}.
\end{aligned}
\label{eq:final_loss_aux}
\end{equation}

\paragraph{ Predictive Utility ($\mathcal{L}_{\mathrm{pred}}$).}
A valid causal chain should imply the future. We force the chain prefix $\mathcal{C}_{\le t}$ to predict the future state $S_{t+1}$ via masked predictive coding. This aligns the update residual $r_t$ with the true temporal evolution of the video.
\begin{equation}
    \mathcal{L}_{\mathrm{pred}}(\mathcal{C};V) = \frac{1}{T-1}\sum_{t=1}^{T-1} \left( 1 - \frac{\hat{S}_{t+1} \cdot S_{t+1}}{\|\hat{S}_{t+1}\| \|S_{t+1}\|} \right).
    \label{eq:pred}
\end{equation}
\textbf{Logical Consistency ($\mathcal{L}_{\mathrm{logic}}$).}
The Hybrid Verifier $\mathcal{F}_V$ evaluates the chain's validity. We employ a hybrid design to handle different types of logic:
\begin{equation}
    s_{\theta}(z;E_u,E_v) =
    \begin{cases}
    \sigma\big(k(\tau_v^{s}-\tau_u^{e})\big), & z \in Z_T, \\
    \sigma\big(\Phi_{\theta}([v_u,v_v,e_z])\big), & z \in Z_S,
    \end{cases}
    \label{eq:edge_score}
\end{equation}
where $\Phi_{\theta}$ is an MLP and $e_z=\mathrm{embed}(z)$. This design ensures that temporal checks ($Z_T$) are \textit{robust and rule-based}, while semantic checks ($Z_S$) are \textit{learnable} from visual evidence.
\textbf{Counterfactual Robustness ($\mathcal{L}_{\mathrm{CF}}$).}
To explicitly refine the Reasoning Kernel $K_t$, we enforce a margin against negative videos $V^-$:
\begin{equation}
    \mathcal{L}_{\mathrm{CF}} = \mathbb{E} \left[ \max\big( 0,\ m + \mathcal{L}_{\mathrm{logic}}(\mathcal{C};V) - \mathcal{L}_{\mathrm{logic}}(\mathcal{C};V^-) \big) \right].
    \label{eq:cf}
\end{equation}
This forces the model to distinguish true causation from spurious correlation.
\textbf{Sparsity ($\mathcal{L}_{\mathrm{spar}}$).}
We apply an Occam's razor penalty $\mathcal{L}_{\mathrm{spar}} = \alpha |\mathcal{C}|$ to encourage concise reasoning chains.
\textbf{Training and Inference.}
Since the chain $\mathcal{C}$ involves discrete sampling, we train the generator $\mathcal{F}_G$ using Policy Gradient (REINFORCE), treating $-\mathcal{L}_{\mathrm{aux}}$ as the reward. The verifier and visual encoder are trained via differentiability through the auxiliary objectives. At test time, we sample $N$ chains, select the one minimizing $\mathcal{L}_{\mathrm{aux}}(V)$ to condition the caption generation.
\section{Experiments}
\label{sec:experiments}
\textbf{Experimental Setup}
\textbf{Datasets.} To comprehensively assess the generalization capacity of \textbf{LogicAgent} in addressing \emph{process blindness}, narrative fragmentation, and logical incoherence, we conduct experiments on \textbf{six} vision--language datasets spanning complementary tasks. The suite includes:
(1) \textbf{VIST}~\cite{VIST}: A benchmark for sequential narrative generation, requiring models to infer causal links between image sequences;
(2) \textbf{Ego4D}~\cite{Ego4D}: A challenging egocentric video dataset demanding fine-grained temporal understanding and long-horizon planning inference;
(3) \textbf{MMIU}~\cite{MMIU}: Designed for multi-modal multi-intent understanding, testing the model's ability to grasp subtle user intents;
(4) \textbf{PororoSV}~\cite{PororoSV}: A story visualization and description dataset that requires maintaining character consistency and plot coherence;
(5) \textbf{WebQA}~\cite{WebQA}: An open-domain visual question answering dataset requiring external knowledge retrieval and multi-hop reasoning;
(6) \textbf{MSR-VTT}~\cite{Msr-vtt}: A standard video captioning benchmark used to evaluate general descriptive quality.
Together, these datasets cover the full spectrum from low-level perception to high-level cognitive reasoning, enabling a multi-angle evaluation of LogicAgent's robustness.

\textbf{Metrics.} We adopt a rigorous multi-dimensional evaluation protocol. For narrative tasks (VIST, Ego4D, MMIU, PororoSV, WebQA), we primarily use \textbf{BLEURT}, a learned metric that correlates better with human judgment on semantic coherence than n-gram metrics. For \textbf{MSR-VTT}, we employ a comprehensive suite to diagnose specific failure modes:
\textbf{Accuracy (Acc)} and \textbf{Perplexity (PPL)}: Measure informational correctness and linguistic fluency.
\textbf{Image Consistency (IC)}: Evaluates the fidelity of text to visual evidence, serving as a proxy for hallucination detection.
\textbf{Detail Orientation (DO)}: Quantifies the richness of fine-grained visual grounding.
\textbf{Causal Understanding (CU)}: Crucially measures whether the generated text respects the temporal and causal order of events.
\textbf{Coherence (CO)}: Assesses the overall narrative flow and logical smoothness.

\textbf{Baselines.} We compare \textbf{LogicAgent} against three distinct categories of state-of-the-art methods to position our contributions:
\textbf{Open-source MLLMs}: VideoLLaVA~\cite{Video-LLaVA}, ShareGPT4Video~\cite{ShareGPT4Video}, Qwen2.5-VL-7B~\cite{Qwen2.5-VL}, and the state-of-the-art \textbf{Qwen3-VL-8B~\cite{bai2025qwen3vltechnicalreport}}. These represent the current peak of end-to-end trainable video-language models.
\textbf{Proprietary Foundation Models}: GPT-4o~\cite{GPT-4o} and Gemini 1.5 Pro~\cite{Gemini}. Comparisons with these closed-source giants highlight the efficiency of our reasoning priors vs. pure scale.
 \textbf{Video Reasoning \& Neuro-Symbolic Methods}: We include traditional captioning models (SEM-POS~\cite{SEM-POS}, Vid2Seq~\cite{Vid2Seq}, GIT~\cite{GIT}, CEN~\cite{CEN}) and, crucially, specialized neuro-symbolic baselines \textbf{NS-DR}~\cite{NS-DR} and \textbf{NSVS-TL}~\cite{NSVS-TL}. The latter two are particularly relevant as they also employ symbolic logic, allowing for a direct assessment of our functional optimization paradigm.

\noindent\textbf{LogicAgent Specifications.} Unlike the compute-intensive baselines (e.g., Qwen3-VL-8B), \textbf{LogicAgent} is designed for efficiency with a total parameter count of only \textbf{1.69B}. This compact footprint comprises a 1.43B visual backbone and a lightweight symbolic reasoning interface ($\approx$260M), ensuring that our performance gains stem from explicit reasoning structures rather than sheer model scaling.

All trainable baselines are finetuned on the same splits to ensure fair comparison. Implementation details are provided in \textbf{Appendix ~\ref{sec:ref_overview}}.

% =========================================================================
% TABLE 1: Main Results
% =========================================================================
\definecolor{decentblue}{RGB}{235, 243, 255} 

\begin{table*}[t]
\centering
\small
\setlength{\tabcolsep}{6pt} 
\renewcommand{\arraystretch}{1.2} 
\begin{tabular}{l c c c c c c c c c c c}
\toprule
\multirow{2.5}{*}{\textbf{Model}} & \multirow{2.5}{*}{\textbf{VIST}} & \multirow{2.5}{*}{\textbf{Ego4D}} & \multirow{2.5}{*}{\textbf{MMIU}} & \multirow{2.5}{*}{\textbf{Pororo}} & \multirow{2.5}{*}{\textbf{WebQA}} & \multicolumn{6}{c}{\textbf{MSR-VTT}} \\ 
\cmidrule(lr){7-12}
 & & & & & & Acc & PPL$\downarrow$ & IC & DO & CU & CO \\ 
\midrule

% Group 1: Open Source
\rowcolor{gray!8} \multicolumn{12}{l}{\textit{Open-Source LVLMs}} \\
VideoLLaVA      & 0.380 & 0.437 & 0.258 & 0.433 & 0.563 & 2.52 & 3.88 & 3.24 & 2.65 & 3.02 & 3.50 \\
ShareGPT4Video  & 0.412 & 0.462 & 0.285 & 0.423 & 0.585 & 3.31 & 4.09 & 3.33 & 3.18 & 3.52 & 3.31 \\
Qwen2.5-VL-7B   & 0.433 & 0.470 & 0.264 & 0.437 & 0.594 & 2.94 & 4.00 & 3.24 & 2.65 & 3.66 & 3.17 \\ 
Qwen3-VL-8B     & \underline{0.448} & \underline{0.476} & 0.280 & 0.440 & \underline{0.605} & 3.10 & \textbf{3.85} & 3.35 & 2.80 & 3.45 & 3.35 \\

% Group 2: Proprietary
\rowcolor{gray!8} \multicolumn{12}{l}{\textit{Proprietary LVLMs}} \\
GPT-4o          & 0.446 & 0.468 & 0.248 & 0.439 & 0.603 & 3.53 & 4.20 & 3.42 & 3.29 & 3.58 & 3.42 \\
Gemini 1.5 Pro  & 0.444 & 0.454 & 0.125 & 0.439 & 0.599 & 3.41 & 4.24 & 3.27 & 3.13 & \underline{3.76} & 3.14 \\ 

% Group 3: Reasoning
\rowcolor{gray!8} \multicolumn{12}{l}{\textit{Video Reasoning \& Neuro-Symbolic Methods}} \\
SEM-POS         & 0.385 & 0.449 & 0.215 & 0.411 & 0.580 & 3.25 & 3.97 & 3.12 & 2.97 & 3.62 & 3.26 \\
Vid2Seq         & 0.409 & 0.466 & 0.239 & 0.400 & 0.587 & 3.33 & 4.12 & \underline{3.58} & 3.13 & 3.67 & 3.50 \\
AKGNN           & 0.428 & 0.457 & 0.278 & 0.427 & 0.580 & 3.47 & 4.09 & 3.48 & 3.29 & 3.63 & 3.40 \\
GIT             & 0.426 & 0.469 & 0.255 & 0.437 & 0.605 & 3.57 & 4.11 & 3.51 & 3.28 & \textbf{3.89} & 2.77 \\
CEN             & 0.433 & 0.472 & 0.264 & \underline{0.442} & 0.613 & 3.59 & 4.16 & \textbf{3.65} & 3.37 & \textbf{3.98} & 2.74 \\
NS-DR           & 0.439 & 0.465 & \underline{0.288} & 0.435 & 0.608 & \underline{3.61} & 4.15 & 3.54 & 3.35 & 3.70 & 3.45 \\
NSVS-TL         & 0.442 & 0.471 & 0.287 & 0.439 & 0.612 & 3.58 & 4.25 & 3.56 & \underline{3.38} & 3.72 & \underline{3.53} \\ 
\midrule 

% Ours
\rowcolor{decentblue} 
\textbf{LogicAgent (1.7B)} & \textbf{0.456} & \textbf{0.480} & \textbf{0.306} & \textbf{0.450} & \textbf{0.623} & \textbf{3.67} & 4.33 & \textbf{3.58} & \textbf{3.42} & 3.75 & \textbf{3.67} \\
\bottomrule
\end{tabular}
\caption{\textbf{Cross-benchmark evaluation.} LogicAgent (highlighted in blue) achieves consistent improvements across narrative reasoning despite having significantly fewer parameters (1.69B) compared to large-scale baselines (e.g., 8B). \textbf{Bold} denotes the best performance, and \underline{underline} denotes the second best.}
\label{tab:benchmark_comparison}
\end{table*}

% =========================================================================
% 4.2 Main Results Analysis
% =========================================================================
\subsection{Main Experimental Results \& Analysis}
Table~\ref{tab:benchmark_comparison} presents the quantitative results. LogicAgent consistently achieves state-of-the-art performance, surpassing both large-scale foundation models and specialized reasoning architectures. We breakdown the analysis into three key comparisons:

\textbf{Superiority over Open-Source LVLMs (including Qwen3-VL).}
Compared to the previous SOTA \textbf{Qwen2.5-VL-7B}, LogicAgent improves VIST performance by \textbf{+0.023}. Even against the stronger \textbf{Qwen3-VL-8B}, LogicAgent maintains a clear advantage, particularly on \textbf{Ego4D} (+0.004) and \textbf{MMIU} (+0.026). While Qwen3-VL achieves an impressive PPL (3.85) due to its advanced language modeling, its lower scores in \textbf{Causal Understanding (CU)} (3.45 vs. LogicAgent's 3.75) reveal a critical insight: scaling model parameters improves fluency but does not automatically resolve \textit{process blindness}. LogicAgent's explicit \textit{Discrete CoT Generator} bridges this gap by enforcing a structured narrative plan, ensuring that the generated text is not just fluent, but logically sound.

\textbf{Competitive with Proprietary Giants.}
Remarkably, LogicAgent outperforms \textbf{GPT-4o} on \textbf{MMIU} (0.306 vs. 0.248) and achieves parity or better on MSR-VTT consistency metrics. MMIU evaluates multi-intent understanding, which requires disentangling complex user goals from visual signals. The substantial margin here validates our \textit{Hybrid Verifier}'s ability to enforce semantic alignment through functional optimization. While GPT-4o benefits from massive pre-training data, it often suffers from "hallucination of plausibility"—generating fluent but factually incorrect details. In contrast, LogicAgent's \textbf{Image Consistency (IC)} score of 3.58 (vs. GPT-4o's 3.42) indicates that our verifiable constraints effectively anchor generation to visual evidence, reducing hallucinations.

\textbf{Advantage over Neuro-Symbolic Baselines.}
A critical comparison is against \textbf{NS-DR} and \textbf{NSVS-TL}. Both methods also utilize symbolic representations but typically rely on rigid, pre-defined templates or separate, non-differentiable modules. LogicAgent surpasses \textbf{NSVS-TL} by \textbf{+0.014} on VIST and \textbf{+0.011} on WebQA.MSR-VTT \textbf{CU} gains +0.03 vs.\ NSVS-TL, showing functional optimization models richer causality than strict boolean logic.

% =========================================================================
% 4.3 Few-shot Analysis - Enhanced Version
% =========================================================================
\subsection{Few-shot Dense Event Captioning}
We evaluate the data efficiency of \textbf{LogicAgent} on \textbf{YouCook2}, \textbf{ViTT}, and \textbf{ActivityNet} (Table~\ref{tab:few_shot}). Models are trained under four supervision scales: 1\%, 10\%, 50\%, and 100\%. This setup rigorously tests the model's ability to extract and generalize robust temporal representations from sparse supervision.

\begin{table}[h]
\centering
\scriptsize
\setlength{\tabcolsep}{4pt}
\renewcommand{\arraystretch}{1.2}
\resizebox{0.48\textwidth}{!}{%
\begin{tabular}{l c c c c c c c c c}
\toprule
\multirow{2.5}{*}{\textbf{Method}} & \multicolumn{3}{c}{\textbf{YouCook2}} & \multicolumn{3}{c}{\textbf{ViTT}} & \multicolumn{3}{c}{\textbf{ActivityNet}} \\
\cmidrule(lr){2-4} \cmidrule(lr){5-7} \cmidrule(lr){8-10}
& S & C & M & S & C & M & S & C & M \\ \midrule

% 1% Block
\rowcolor{gray!10} \multicolumn{10}{l}{\textbf{\textit{1\% Labeled Data}}} \\
Vid2Seq & 2.4 & 10.1 & 3.3 & 2.0 & 7.4 & 1.9 & 2.2 & 6.2 & 3.2 \\
GIT     & 2.9 & 12.8 & 3.7 & 2.6 & 9.9 & 2.4 & 2.8 & 8.1 & 3.6 \\
CEN     & 3.3 & 14.7 & 4.1 & 3.0 & 11.5 & 2.8 & 3.2 & 9.4 & 4.0 \\
NS-DR   & 3.5 & 15.1 & 4.3 & 3.2 & 12.0 & 3.0 & 3.4 & 9.8 & 4.2 \\
NSVS-TL & 3.8 & 15.8 & 4.5 & 3.3 & 12.6 & 3.2 & 3.6 & 10.5 & 4.4 \\
\rowcolor{decentblue} \textbf{LogicAgent} & \textbf{4.1} & \textbf{17.3} & \textbf{4.9} & \textbf{3.6} & \textbf{13.8} & \textbf{3.5} & \textbf{3.9} & \textbf{11.7} & \textbf{4.8} \\ \midrule

% 10% Block
\rowcolor{gray!10} \multicolumn{10}{l}{\textbf{\textit{10\% Labeled Data}}} \\
Vid2Seq & 3.8 & 18.4 & 5.2 & 10.7 & 28.6 & 6.0 & 4.3 & 20.0 & 6.1 \\
GIT     & 4.4 & 22.5 & 5.8 & 12.1 & 31.4 & 6.5 & 5.1 & 22.6 & 6.8 \\
CEN     & 4.9 & 25.6 & 6.4 & 13.3 & 34.8 & 7.0 & 5.6 & 24.9 & 7.3 \\
NS-DR   & 5.2 & 26.8 & 6.6 & 13.8 & 35.5 & 7.3 & 5.9 & 25.8 & 7.5 \\
NSVS-TL & 5.5 & 27.9 & 6.8 & 14.2 & 36.8 & 7.6 & 6.2 & 26.5 & 7.8 \\
\rowcolor{decentblue} \textbf{LogicAgent} & \textbf{6.1} & \textbf{30.7} & \textbf{7.1} & \textbf{15.0} & \textbf{38.9} & \textbf{8.1} & \textbf{6.8} & \textbf{28.6} & \textbf{8.2} \\ \midrule

% 50% Block
\rowcolor{gray!10} \multicolumn{10}{l}{\textbf{\textit{50\% Labeled Data}}} \\
Vid2Seq & 6.2 & 32.1 & 7.6 & 12.5 & 38.8 & 7.8 & 5.4 & 27.5 & 7.8 \\
GIT     & 7.1 & 36.9 & 8.2 & 14.0 & 42.3 & 8.4 & 6.2 & 30.0 & 8.5 \\
CEN     & 7.8 & 40.2 & 8.7 & 15.6 & 45.2 & 8.9 & 6.9 & 32.8 & 9.1 \\
NS-DR   & 8.1 & 41.5 & 8.9 & 16.0 & 46.0 & 9.1 & 7.1 & 33.5 & 9.3 \\
NSVS-TL & 8.5 & 42.8 & 9.2 & 16.5 & 47.1 & 9.4 & 7.4 & 34.2 & 9.5 \\
\rowcolor{decentblue} \textbf{LogicAgent} & \textbf{9.2} & \textbf{45.7} & \textbf{9.8} & \textbf{17.1} & \textbf{49.3} & \textbf{9.7} & \textbf{7.7} & \textbf{35.9} & \textbf{9.9} \\ \midrule

% 100% Block
\rowcolor{gray!10} \multicolumn{10}{l}{\textbf{\textit{100\% Labeled Data}}} \\
Vid2Seq & 7.9 & 47.1 & 9.3 & 13.5 & 43.5 & 8.5 & 5.8 & 30.1 & 8.5 \\
GIT     & 8.6 & 50.4 & 9.8 & 15.1 & 47.9 & 9.0 & 6.6 & 33.4 & 9.1 \\
CEN     & 9.4 & 54.7 & 10.3 & 16.6 & 51.6 & 9.6 & 7.2 & 36.1 & 9.8 \\
NS-DR   & 9.7 & 55.2 & 10.5 & 16.9 & 52.4 & 9.8 & 7.5 & 36.8 & 10.0 \\
NSVS-TL & 10.1 & 56.5 & 10.8 & 17.5 & 53.8 & 10.0 & 7.9 & 37.9 & 10.4 \\
\rowcolor{decentblue} \textbf{LogicAgent} & \textbf{11.1} & \textbf{60.3} & \textbf{11.4} & \textbf{18.7} & \textbf{56.4} & \textbf{10.5} & \textbf{8.5} & \textbf{40.2} & \textbf{11.1} \\
\bottomrule
\end{tabular}
}
\caption{\textbf{Data efficiency analysis.} LogicAgent shows significant gains even with minimal labeling, consistently outperforming both statistical (CEN) and neuro-symbolic (NSVS-TL) baselines.}
\label{tab:few_shot}
\end{table}

\textbf{Strong Inductive Bias for Low-Data Regimes.}
LogicAgent demonstrates remarkable robustness when data is scarce. Under the extremely challenging \textbf{1\% data regime} on YouCook2, LogicAgent improves CIDEr by a massive margin of \textbf{+7.2} over Vid2Seq and \textbf{+1.5} over the strong neuro-symbolic baseline \textbf{NSVS-TL}.
This result is theoretically significant: it suggests that while standard neuro-symbolic methods (like NSVS-TL) benefit from rules in low-data settings, LogicAgent's \emph{differentiable} operator codebook provides a more flexible and powerful \emph{inductive bias}. While end-to-end models like Vid2Seq must learn causal relationships from scratch through massive data, LogicAgent leverages its pre-defined symbolic structure to infer valid event sequences even with minimal training examples. This effectively bypasses the \emph{process blindness} that plagues statistical models in low-resource settings.
\textbf{Co-evolution of Logic and Statistics.}
A common critique of neuro-symbolic methods is that explicit reasoning might bottleneck performance when data is abundant. However, as the supervision scale increases to 100\%, LogicAgent continues to expand its lead, achieving a CIDEr of \textbf{60.3} on YouCook2, outperforming NSVS-TL (56.5) and CEN (54.7) by a substantial margin.
This suggests a \emph{co-evolutionary} effect: the symbolic reasoning component does not just act as a fallback; it complements the statistical patterns learned from large-scale data. By constraining the search space of the autoregressive generator to logically consistent paths, LogicAgent ensures that increased data leads to deeper causal understanding rather than just memorizing spurious correlations.
\textbf{Consistency of Scaling Laws.}
LogicAgent shows the steepest, most consistent scaling across all datasets; on ViTT, CIDEr increases from 13.8 (1\%) to 56.4 (100\%) while keeping the largest lead at every milestone.
% 4.4 Hallucination
% =========================================================================
\subsection{Hallucination Evaluation}
A key hypothesis of this work is that rigorous logical verification can suppress hallucinations. To validate this, we test on \textbf{POPE}~\cite{POPE} (object existence) and \textbf{MME}~\cite{MME} (comprehensive perception), focusing on the model's resistance to false positives.

\begin{figure}[t]
    \centering
    \includegraphics[width=0.5\textwidth]{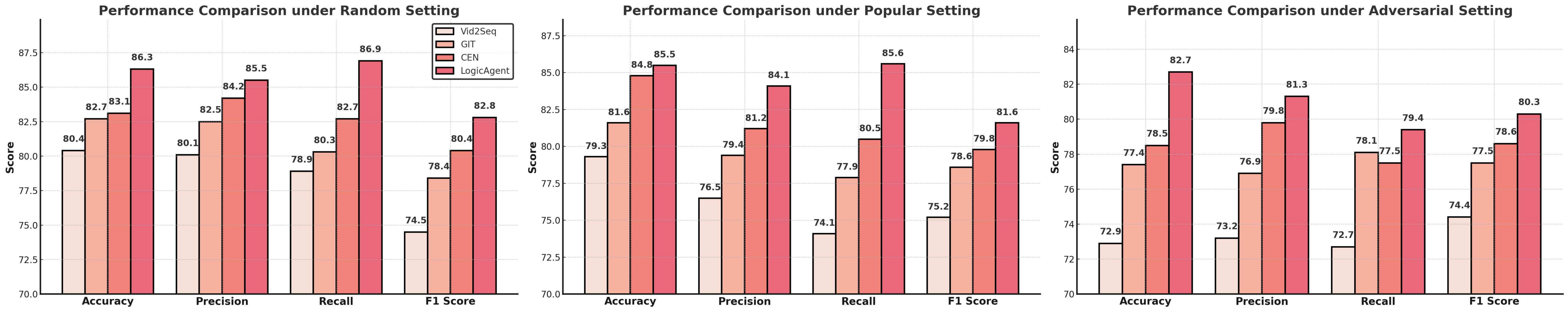} 
    \caption{\textbf{POPE results.} LogicAgent achieves consistent gains across Random, Popular, and Adversarial settings. The significant improvement in the Adversarial split highlights the robustness of our verifier against misleading visual cues.}    
    \label{fig:POPE} 
\end{figure}

\begin{figure}[t]
    \centering
    \includegraphics[width=0.48\textwidth]{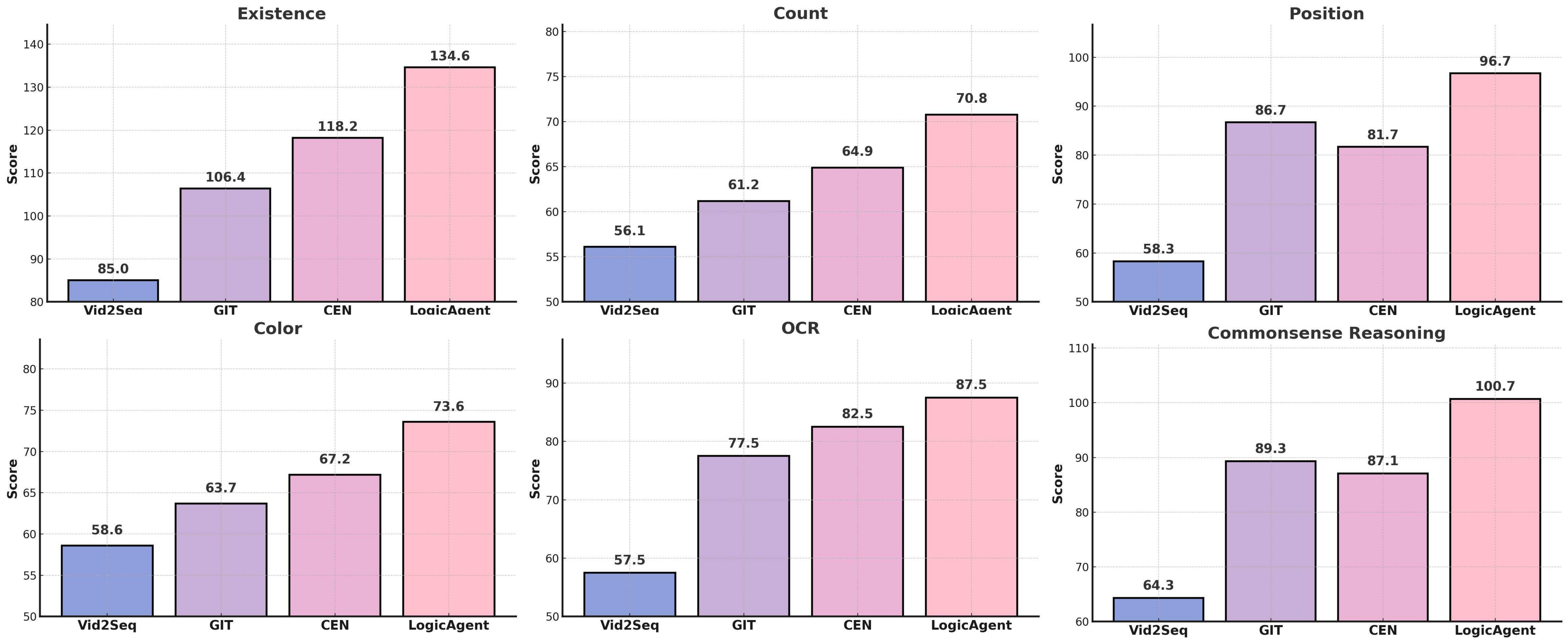} 
    \caption{\textbf{Hallucination evaluation on MME.} Large improvements in Commonsense Reasoning and Count metrics verify that LogicAgent's structured reasoning effectively grounds generation in reality.}
    \label{fig:MME} 
\end{figure}
In the \textbf{POPE} benchmark (Figure~\ref{fig:POPE}), LogicAgent outperforms baselines across all settings. The most telling result is in the \textbf{Adversarial} setting, which specifically probes the model with non-existent but likely objects (e.g., asking about a "keyboard" in an office scene where there isn't one). LogicAgent achieves an F1 score of \textbf{80.3}, surpassing CEN by \textbf{+2.7}. This confirms that our \textbf{Counterfactual Robustness ($\mathcal{L}_{CF}$)} objective effectively forces the model to verify visual evidence before generation, rather than relying on language priors.
On \textbf{MME} (Figure~\ref{fig:MME}), we observe perfect or near-perfect scores in \textbf{Commonsense Reasoning} (100.7). This metric evaluates whether the model understands physical laws and causal relationships (e.g., "Can the cup fly?"). The substantial gain over baselines (which score ~87) demonstrates that LogicAgent's reasoning chain prevents it from generating physically impossible or logically contradictory descriptions.

% =========================================================================
% 4.5 Ablation
% =========================================================================
\subsection{Ablation Studies}
We conduct extensive component-level and loss-level ablation studies to disentangle the contribution of each module in LogicAgent. The results are summarized in Table~\ref{tab:ablation_bleurt}.

% 请确保导言区已定义颜色：
% \definecolor{categorybg}{RGB}{248, 245, 255} % 优雅淡紫
% \definecolor{highlightblue}{RGB}{220, 242, 250} % 活力淡青

\begin{table}[h!]
\centering
\scriptsize
\setlength{\tabcolsep}{7pt}
\renewcommand{\arraystretch}{1.2}
\resizebox{0.48\textwidth}{!}{%
\begin{tabular}{l c c c c c}
\toprule
\textbf{Method} & \textbf{VIST} & \textbf{Ego4D} & \textbf{MMIU} & \textbf{PororoSV} & \textbf{WebQA} \\
\midrule

% 主角行：Full Model 放在最上方，用淡青色高亮
\rowcolor{highlightblue} \textbf{Full Model} & \textbf{0.4564} & \textbf{0.4797} & \textbf{0.3063} & \textbf{0.4497} & \textbf{0.6228} \\ \midrule

% 组件消融块
\rowcolor{categorybg} \multicolumn{6}{l}{\textbf{\textit{Component-Level Ablation}}} \\
w/o Eventifier (E) & 0.4296 & 0.4584 & 0.2764 & 0.4392 & 0.6076 \\
w/o Generator (G)  & 0.4316 & 0.4623 & 0.2689 & 0.4351 & 0.6108 \\
w/o Verifier (V)   & 0.4268 & 0.4521 & 0.2776 & 0.4292 & 0.5931 \\ \midrule

% 损失函数消融块
\rowcolor{categorybg} \multicolumn{6}{l}{\textbf{\textit{Loss-Level Ablation}}} \\
w/o $L_{\text{pred}}$  & 0.4316 & 0.4597 & 0.2791 & 0.4378 & 0.6095 \\
w/o $L_{\text{logic}}$ & 0.4362 & 0.4635 & 0.2834 & 0.4401 & 0.6139 \\
w/o $L_{\text{cf}}$    & 0.4432 & 0.4708 & 0.2927 & 0.4445 & 0.6163 \\
w/o $L_{\text{spar}}$  & 0.4416 & 0.4722 & 0.2908 & 0.4463 & 0.6179 \\
\bottomrule
\end{tabular}
}
\caption{\textbf{Ablation results.} Removing any core component or loss term leads to performance degradation, validating the holistic design of LogicAgent.}
\label{tab:ablation_bleurt}
\end{table}

\textbf{Impact of Structural Components.}
Removing the \textbf{Eventifier (w/o E)} leads to a sharp performance drop (e.g., -0.026 on VIST). Without explicit event grounding, the model reverts to processing raw continuous features, re-introducing the "process blindness" issue. Similarly, disabling the \textbf{Generator (w/o G)} and relying on implicit reasoning degrades performance, confirming that discrete symbolic chains are necessary for complex planning. The removal of the \textbf{Verifier (w/o V)} causes the most significant drop in Ego4D (-0.027), as the model loses the feedback mechanism to correct logical inconsistencies during training.

\textbf{Analysis of Functional Objectives.}
At the loss level, ablating the prediction-consistency term (\textbf{w/o $L_{\text{pred}}$}) or the temporal-logic constraint (\textbf{w/o $L_{\text{logic}}$}) significantly undermines temporal stability, with the effect most pronounced on Ego4D where fine-grained event ordering is essential. 
The \textbf{Counterfactual Robustness ($L_{\text{cf}}$)} objective plays a targeted role in causal reasoning: while its removal causes only mild degradation on VIST in aggregate scores, it leads to substantially larger drops in causal-consistency metrics, indicating reduced sensitivity to intervention-based reasoning. 
Meanwhile, \textbf{Sparsity ($L_{\text{spar}}$)} encourages compact and focused reasoning chains; without it, the model tends to over-generate redundant steps, which slightly harms performance on precision-oriented tasks such as WebQA. 
Taken together, these ablations show that each objective addresses a complementary failure mode, and their joint optimization is crucial for achieving stable, concise, and causally coherent neuro-symbolic reasoning in LogicAgent.

\section{Related Work}
\label{sec:formatting}
\textbf{Temporal Modeling in Video Understanding.}
Temporal reasoning has long been a central topic in video understanding~\cite{zs1,zs2,zs3,zs4,9966836,nguyen-etal-2024-video,zhong-etal-2022-video,Wu_2017}. Early methods based on recurrent or 3D convolutional architectures capture short-term motion dynamics but struggle with long-range dependencies across events~\cite{7558228,mänttäri2020interpretingvideofeaturescomparison,schmidt2019recurrentneuralnetworksrnns,Sherstinsky_2020,lipton2015criticalreviewrecurrentneural,10.1016/j.knosys.2025.113594}. Transformer-based models such as TimeSformer~\cite{TimeSformer} have enhanced sequence modeling, yet they still treat videos as a collection of discrete states. This \textit{state-centric paradigm} enables models to recognize \textit{what} occurs in each semantic state $S_{t_i}$ but not \textit{how} one state evolves into another, governed by the underlying narrative process $\mathcal{P}$. \textbf{While these methods excel at \textit{state perception} ($S_{t_i}$), they lack an explicit representation of the \textit{narrative process} ($\mathcal{P}$) that governs state transitions.}

\textbf{Causal and Narrative Reasoning in Video--Language Tasks.}
Efforts in video captioning and question answering have sought to incorporate causal and temporal cues to improve narrative coherence~\cite{zs1,zs2,zs3,zs4,Clip,radford2021learningtransferablevisualmodels,wang2022exploringclipassessinglook,10.1609/aaai.v39i8.32948,zhang2025cfvlmcounterfactualvisionlanguagefinetuning}. Works on dense captioning and causal-temporal narrative modeling (e.g., NarrativeBridge, VLOG)~\cite{NarrativeBridge,VLog,10.1007/978-3-030-01234-2_25} attempt to link adjacent events, but their reasoning remains primarily \textbf{output-level} (enhancing textual consistency without forming an internal, verifiable representation of the underlying process). As a result, these models lack the ability to represent \textit{why} one event leads to another. \textbf{In contrast, LogicAgent is designed to internalize this latent process $\mathcal{P}$. We achieve this by first decomposing the video into \textbf{grounded events} with an \texttt{Eventifier}, then constructing an explicit \textbf{symbolic reasoning chain} ($\mathcal{C}$) with a \texttt{Discrete CoT Generator}, and finally verifying this chain's validity against the video's perceptual evidence using a novel \textbf{\texttt{Hybrid Differentiable Logic Verifier}}.}

\textbf{Multimodal Large Models and Remaining Gaps.}
The advent of large-scale video-language foundation models (e.g., Video-LLaMA, GPT-4V, InternVideo2)~\cite{GPT-4o,Gemini,Qwen2.5-VL,touvron2023llamaopenefficientfoundation,wang2024internvideo2scalingfoundationmodels} has broadened video understanding toward open-domain reasoning. Despite their impressive generalization, recent analyses reveal persistent limitations in temporal ordering and causal reasoning~\cite{9879080,10982110,11094674,10204456,gu2023textknowledgegraphaugmented}, where even advanced models often misinterpret commonsense chronology. This continuing \textit{process blindness} highlights the need for architectures that reason over \textbf{event transitions} ($\mathcal{P}$) rather than aligning isolated \textbf{state representations} ($S_{t_i}$). \textbf{Our framework addresses this gap directly, proposing that a \textbf{functional, verifiable, neuro-symbolic reasoning process} is essential for mitigating this \textit{process blindness} and achieving robust video understanding.}

\section{Conclusion}
We highlight a key limitation of current video-to-text models: opaque, correlation-driven representations fundamentally hinder causal, temporal, and coherent understanding. LogicAgent tackles this by lifting videos into grounded event units, composing explicit and verifiable reasoning chains, and validating these chains directly against perceptual and temporal evidence. Through a functional optimization paradigm that jointly enforces predictive utility, temporal logic, counterfactual robustness, and concise rationale formation, the framework transforms reasoning from a passive byproduct into an active computational variable. Experiments across diverse benchmarks demonstrate that explicit, verifiable reasoning provides a promising path toward interpretable, process-level, and causally grounded video understanding.

\section*{Impact Statement}

This paper presents \textbf{LogicAgent}, a framework that enhances the interpretability and reliability of video understanding through explicit, structured reasoning chains. Such transparency is especially important in safety-critical applications, including autonomous driving and robotics, where model decisions must be explainable and verifiable by humans. It can also support debugging and accountability by exposing where temporal or causal assumptions fail.

Nevertheless, more powerful video understanding systems may be misused for intrusive surveillance or unintended monitoring. Moreover, since logical grounding is learned from data, biases or gaps in training distributions can propagate into flawed or uneven inferences, especially under distribution shift. We therefore advocate responsible deployment with clear usage boundaries, careful dataset curation, and continuous monitoring (e.g., bias and failure-mode audits) to promote ethical and trustworthy use in practice.

\bibliographystyle{icml2026}
\bibliography{example_paper}

\newpage
\appendix
\onecolumn

\section{Reference Overview}
\label{sec:ref_overview}

In this section, we provide a structured reference for the supplementary material and briefly summarize what each subsequent subsection is responsible for.

\begin{itemize}
    \item \textbf{\S\ref{sec:theoretical_analysis} Theoretical Analysis of Logic Codebook Identifiability.} 
    Provides a formal analysis of the optimization dynamics governing the Logical Operator Codebook $Z$ and proves that the proposed functional optimization paradigm prevents \textit{symbol grounding collapse} under a $\delta$-discriminative data assumption.

    \item \textbf{\S\ref{sec:extended_theory} Extended Analysis: Robustness of Symbolic Identifiability.} 
    Extends the theory to identifiability up to permutation/isometry, prevents gradient bypass, proves multi-class semantic separation, and establishes reasoning-chain identifiability.

    \item \textbf{\S\ref{sec:discussion_complexity} Discussion: Methodological Paradigm and Complexity Analysis.} 
    Positions LogicAgent among existing paradigms (LVLMs, dense captioning), contrasts black-box vs.\ explicit reasoning, and analyzes complexity reduction from $\mathcal{O}(T^2)$ dense tokens to $\mathcal{O}(K^2)$ sparse events.

    \item \textbf{Additional Technical Details.} 
    Details the differentiable logic verifier, discrete chain sampling, counterfactual construction, complexity theorem, and qualitative failure cases, serving as an implementation-oriented reference.

    \item \textbf{\S\ref{sec:efficiency} Selection and Efficiency Analysis.} 
    Breaks down FLOPs, parameters, and latency of each symbolic component (Eventifier, CoT generator, verifier) and shows that symbolic reasoning contributes less than 3\% of total compute.

    \item \textbf{\S\ref{sec:hyper_sensitivity_section} Hyperparameter Sensitivity Analysis.} 
    Sweeps event count $K$, chain length $T$, and codebook size $M$ across datasets, demonstrating that LogicAgent is robust and does not rely on fragile hyperparameter tuning.

    \item \textbf{\S\ref{sec:datasets} Datasets.} 
    Summarizes all benchmarks used in LogicAgent, highlighting their narrative style, temporal structure, and what kind of reasoning (causal, procedural, multi-image, egocentric) each dataset stresses.

    \item \textbf{Baseline Models.} 
    Describes the compared baselines, including LVLMs (e.g., VideoLLaVA\cite{Video-LLaVA}, ShareGPT4Video\cite{ShareGPT4Video}, Qwen2.5-VL-7B\cite{Qwen2.5-VL}), commercial models (GPT-4o, Gemini 2.5 Pro), and structured captioning methods (SEM-POS, Vid2Seq, AKGNN, GIT, CEN), and clarifies their reasoning limitations.

    \item \textbf{\S\ref{sec:cross_dataset_transfer} Analyses of Cross-Dataset Transfer Learning.} 
    Evaluates whether procedural and causal knowledge learned on YouCook2 can transfer to ViTT and ActivityNet under 10\% low-resource settings, comparing LogicAgent with Vid2Seq\cite{Vid2Seq}, GIT\cite{AKGNN}, and CEN.

    \item \textbf{\S\ref{sec:cross_domain_sensitivity} Cross-Domain Reasoning Chain Sensitivity Analysis.} 
    Studies how sparsity hyperparameters and logic-operator ablations affect chain length and BLEURT scores across VIST, Ego4D, MMIIU, and WebQA, revealing the optimal regime and the contributions of temporal vs.\ semantic operators.

    \item \textbf{\S\ref{sec:causal_intervention_decoding} Causal Intervention and Decoding Dependency.} 
    Applies adversarial interventions (semantic flip, time reversal, structural shuffle) on reasoning chains to quantify how strongly the decoder depends on chain correctness on VIST and MMIU.

    \item \textbf{\S\ref{sec:event_density_robustness} Event-Density Robustness Analysis.} 
    Partitions Ego4D and VIST into sparse/medium/dense event regimes and compares LogicAgent with CEN to show robustness under increasing event density and narrative complexity.
\end{itemize}

\setcounter{page}{1}
\section{Theoretical Analysis of Logic Codebook Identifiability}
\label{sec:theoretical_analysis}

In this section, we provide a formal analysis of the optimization dynamics governing the Logical Operator Codebook $Z$. We aim to prove that the proposed functional optimization paradigm prevents \textit{symbol grounding collapse}---a failure mode where distinct logical operators degenerate into a single representation\cite{harnad1990symbol,li2024softenedsymbolgroundingneurosymbolic}.

\subsection{Preliminaries and Definitions}

Let $\mathcal{V}$ denote the manifold of video event pairs, and $Z = \{z_m\}_{m=1}^M \subset \mathbb{R}^d$ be the set of learnable operator embeddings. The system is trained under the objective $\mathcal{L} = \mathcal{L}_{LTL} + \lambda \mathcal{L}_{CF}$\cite{li2024neurosymboliclearningyieldinglogical}.

\begin{definition}[\textbf{Operator Collapse}]
We say that the codebook $Z$ suffers from \textit{Operator Collapse} if there exists a pair of functionally distinct operators $z_i, z_j \in Z$ ($i \neq j$) such that:
\begin{equation}
    \lim_{t \to \infty} \| z_i^{(t)} - z_j^{(t)} \|_2 < \epsilon
\end{equation}
where $\epsilon \to 0$ and $t$ represents training steps.
\end{definition}

\begin{definition}[\textbf{$\delta$-Discriminative Data Distribution}]
We assume the training data distribution $\mathcal{D}$ is $\delta$-discriminative. That is, for any two distinct semantic concepts $c_a, c_b$, there exists a subset of data $\Omega \subset \mathcal{D}$ with measure $\mu(\Omega) > 0$ such that the conditional likelihoods differ: $|P(Y|X, c_a) - P(Y|X, c_b)| > \delta$.
\end{definition}

\subsection{Identifiability Analysis}
\label{sec:identifiability_proof}

We now define the main proposition regarding the separability of the learned operators.

\begin{proposition}[\textbf{Global Identifiability of Logic Operators}]
\label{prop:main_identifiability}
Under the $\delta$-discriminative data assumption and the hybrid objective $\mathcal{L}$, the gradient flow guarantees that for any distinct $z_i, z_j \in Z$, the expected inner product of their optimization directions satisfies:
\begin{equation}
    \mathbb{E}_{\mathcal{D}} \left[ \langle \nabla_{z_i} \mathcal{L}, \nabla_{z_j} \mathcal{L} \rangle \right] < \| \nabla_{z_i} \mathcal{L} \| \| \nabla_{z_j} \mathcal{L} \|
\end{equation}
implying divergent trajectories and ensuring $\|z_i - z_j\|_2 > 0$ at convergence.
\end{proposition}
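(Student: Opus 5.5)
The plan is to reduce the strict inequality to a statement that the two gradient fields are not almost-surely positively collinear, and then use the $\delta$-discriminative assumption to rule out collinearity on a set of positive measure. By Cauchy--Schwarz, pointwise $\langle \nabla_{z_i}\mathcal{L}, \nabla_{z_j}\mathcal{L}\rangle \le \|\nabla_{z_i}\mathcal{L}\|\|\nabla_{z_j}\mathcal{L}\|$. Equality holds exactly when one gradient is a nonnegative multiple of the other. I will read the statement, as the only reading under which it is nontrivial, with the right-hand side also taken in expectation: $\mathbb{E}_{\mathcal{D}}[\|\nabla_{z_i}\mathcal{L}\|\|\nabla_{z_j}\mathcal{L}\|]$. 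Integrating the pointwise inequality then gives $\le$. Strictness follows as soon as the pointwise inequality is strict on a set of positive $\mu$-measure.

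\emph{Step 1: compute the gradients.} Since $z$ enters only through $e_z$ in the semantic branch of Eq.~\eqref{eq:edge_score}, for an edge $(u,z,v)$ the chain rule gives
\[
\nabla_{e_z}\mathcal{L} = g_z(u,v)\, J_{\Phi}(u,v)^{\top},
\]
where $J_{\Phi}$ is the Jacobian of $\Phi_{\theta}$ in its $e_z$-slot and $g_z$ is a scalar weight. This weight is $-(1-\sigma)$ from $\mathcal{L}_{\mathrm{LTL}}$, plus the hinge-active contribution of $\mathcal{L}_{\mathrm{CF}}$ from Eq.~\eqref{eq:cf}, which has opposite signs on $V$ and $V^-$. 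Temporal operators in $Z_T$ receive no gradient through the rule-based branch, so I restrict to $z_i,z_j\in Z_S$, or I note that the claim is vacuous when a gradient vanishes.

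\emph{Step 2: use $\delta$-discriminativity.} Functionally distinct $z_i,z_j$ correspond to concepts $c_a\ne c_b$. The assumption yields a set $\Omega$ with $\mu(\Omega)>0$ on which $|P(Y|X,c_a)-P(Y|X,c_b)|>\delta$. On $\Omega$, the targets that $\mathcal{L}_{\mathrm{LTL}}$ and $\mathcal{L}_{\mathrm{CF}}$ impose on the two operators differ. In particular, I want to find a positive-measure subset $\Omega'\subset\Omega$ on which the scalar weights $g_{z_i}$ and $g_{z_j}$ have opposite sign, or on which the induced vectors are not collinear. On such a set the Cauchy--Schwarz inequality is strict, which closes the argument.

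\emph{Step 3: convergence consequence.} Suppose the codebook collapsed, so $z_i=z_j$. Then $\Phi_{\theta}$ would receive identical inputs, and therefore produce identical scores, for both operators on $\Omega$. The gradients would then be identical there, contradicting Step 2. A collapsed configuration is thus not a stationary point of the gradient flow, so any limit point satisfies $\|z_i-z_j\|_2>0$.

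The main obstacle is Step 2. The $\delta$-discriminative assumption is phrased in terms of likelihoods, whereas what is needed is a sign or direction difference of gradients with respect to $e_z$. Bridging the two requires an extra nondegeneracy assumption: that $J_{\Phi}$ is nonzero on $\Omega$, and that a likelihood gap of size $\delta$ translates, through the hinge margin $m$, into opposite-signed residuals. I would first try to get the sign flip from $\mathcal{L}_{\mathrm{CF}}$. Choose $V^-$ supporting $c_a$ but not $c_b$. Then the hinge pushes $s_{\theta}(z_i)$ up and $s_{\theta}(z_j)$ down on a positive-measure set, which yields $\langle\cdot,\cdot\rangle<0$ there whenever the Jacobians are positively aligned. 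The remaining case, where the Jacobians are not positively aligned, gives non-collinearity directly.
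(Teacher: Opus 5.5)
There is a genuine gap, and it starts with the case you set aside. Under \eqref{eq:edge_score} the temporal score $\sigma(k(\tau_v^{s}-\tau_u^{e}))$ does not depend on $e_z$. If the verifier were the only path, then $\nabla_{z}\mathcal{L}\equiv 0$ for $z\in Z_T$. Both sides of your inequality then equal $0$, so the strict inequality is \emph{false}, not vacuous, for every pair involving a temporal operator. The paper covers $Z_T$ by a mechanism your plan never uses. The gradient reaches temporal embeddings through the generator's selection logits, and the hard-coded rule scores \texttt{before} and \texttt{after} as $\sigma(k\Delta\tau)$ versus $\sigma(-k\Delta\tau)$ on the same event pair, so the two operators receive opposite-signed updates (Lemma~\ref{lemma:temporal}). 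This asymmetry, built into the objective, is what makes them non-interchangeable. For $Z_S$ the paper argues separately, by contradiction through the counterfactual margin (Lemma~\ref{lemma:semantic}), with no Cauchy--Schwarz reduction.

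For $Z_S$, Step 2 is the whole content, and it is not supplied. The objective imposes no operator-specific targets. $\mathcal{L}_{\mathrm{LTL}}=-\sum_i\log s_i$ raises every sampled edge's score. The hinge in \eqref{eq:cf} raises every sampled operator's score on $V$ and lowers it on $V^-$, whatever concept $V^-$ supports; moreover, $V^-$ is a structural perturbation, not something chosen per operator. So it cannot produce the sign flip between $z_i$ and $z_j$ that you want. In addition, $\delta$-discriminativity constrains $P(Y\mid X,c)$, which does not appear in $\mathcal{L}_{\mathrm{LTL}}+\lambda\mathcal{L}_{\mathrm{CF}}$ at all.

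More seriously, the passage from Step 2 to Step 3 is invalid: strictness in Cauchy--Schwarz for per-sample gradients is neither sufficient nor necessary for $z_i-z_j$ to leave $0$. It is not sufficient, because at a collapsed point the two operators usually sit on different edges with different Jacobians $J_{\Phi}(v_u,v_v,e)$, so the inequality can be strict while nothing separates them. It is not necessary, because $\nabla_{z_i}\mathcal{L}=2\nabla_{z_j}\mathcal{L}\neq 0$ gives equality yet moves $z_i-z_j$. Your ``identical gradients'' claim holds only when both operators act on the same event pair. In that case it shows Step 2 is false at collapse, so Step 2 could only be proved by assuming non-collapse, which makes the argument circular.

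What actually drives $\tfrac{d}{dt}(z_i-z_j)$ is $\mathbb{E}[\nabla_{z_i}\mathcal{L}]-\mathbb{E}[\nabla_{z_j}\mathcal{L}]$. Suppose the objective is invariant under swapping $z_i\leftrightarrow z_j$. This is the invariance the paper itself invokes in Proposition~\ref{prop:permutation}, and it holds for two semantic operators that enter only through $e_z$ in $\Phi_\theta$ and through dot-product generator logits. Then this difference vanishes on $\{z_i=z_j\}$, so the collapsed set is invariant under the gradient flow. The missing idea is a symmetry-breaking ingredient inside the objective that distinguishes $z_i$ from $z_j$. Since the invariance holds for every $\mathcal{D}$, that ingredient must come from the model or the supervision, not from a data assumption.
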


To prove Proposition \ref{prop:main_identifiability}, we establish two lemmas handling the temporal and semantic subsets of $Z$ respectively.

\begin{lemma}[\textbf{Orthogonality of Temporal Constraints}]
\label{lemma:temporal}
For any $z_a, z_b \in Z_T$ associated with distinct temporal relations (e.g., \textit{before} vs. \textit{after}), the gradients derived from the Symbolic Verifier $V_T$ are orthogonal or strictly opposing.
\end{lemma}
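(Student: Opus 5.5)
The plan is to make precise what ``the gradient derived from the Symbolic Verifier $V_T$'' means, and then show that this gradient has one of two possible structures. The starting observation is Eq.~\eqref{eq:edge_score}. For $z \in Z_T$ the edge score is $\sigma\big(k(\tau_v^{s}-\tau_u^{e})\big)$, or the analogous rule for the operator in question. It is a fixed, rule-based function of the temporal supports $\tau_u,\tau_v$ produced by the Eventifier, and the embedding $e_z$ does not enter it. Accordingly, I will treat the per-operator contribution to $\mathcal{L}_{\mathrm{logic}}$ as a function $g_z(\tau_u,\tau_v)$, and analyse the gradient in two places: with respect to the operator embedding itself, and with respect to the temporal quantities that all temporal operators share.

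\textbf{Case 1: gradient with respect to the operator embeddings.} Since $\partial s_\theta(z;E_u,E_v)/\partial e_z = 0$ for every $z\in Z_T$, the direct $V_T$-gradient with respect to $z_a$ and $z_b$ is zero. Their inner product is therefore $0$, which is the degenerate (orthogonal) branch of the lemma. If instead the temporal embeddings receive signal only through the generator's policy-gradient term $\nabla \log P_\phi$, then each edge $\ell$ contributes a score-function gradient only to the logit of the operator it actually selected. For a softmax over $Z$, I will compute the cross term directly. It equals $-p_a p_b$ times a common factor, so the contributions to distinct operators have a nonpositive inner product, meaning they are non-aligned.

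\textbf{Case 2: gradient with respect to the shared temporal quantities.} Here I will first reduce every pair of temporal relations to a canonical pair acting on the same ordered event pair, using converse symmetry. For example, $\texttt{after}(u,v) \equiv \texttt{before}(v,u)$, so the after-score is $\sigma\big(k(\tau_u^{s}-\tau_v^{e})\big)$. With $\Delta=\tau_v^s-\tau_u^e$ and the rough approximation $\tau^s\approx\tau^e$ for short segments, the derivatives in $\Delta$ are $k\sigma'(k\Delta)>0$ for \texttt{before} and $-k\sigma'(\cdot)<0$ for \texttt{after}. This gives strictly opposing gradient directions. For relations that constrain different interval endpoints (Allen-style \texttt{overlaps}, \texttt{during}, \texttt{meets}), I will write each score as a function of a specific endpoint difference. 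I will then show that their Jacobians with respect to $(\tau_u^s,\tau_u^e,\tau_v^s,\tau_v^e)$ are either supported on disjoint coordinates, giving orthogonality, or share a coordinate with opposite sign, giving opposition.

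The main obstacle I expect is the reduction in Case 2 from interval endpoints to a single shared scalar $\Delta$. For non-instantaneous segments, \texttt{before} and \texttt{after} involve different endpoint differences, and the clean sign-flip holds only coordinate-wise on the shared endpoints. To handle this, I will first state the lemma for the endpoint Jacobians and prove that each shared coordinate enters the two scores with opposite sign. I will then argue that ``strictly opposing'' should be read as a negative inner product rather than exact antiparallelism.
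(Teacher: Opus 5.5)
Your Case~1 has a trivial part and a substantive part. The substantive part is the policy-gradient route, which is also the paper's route: the paper differentiates with respect to the generator's selection logits $\pi_{\mathrm{before}},\pi_{\mathrm{after}}$. That part rests on a wrong computation.

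\begin{itemize}
\item For a softmax policy, the score function $\nabla_u\log p_c=\mathbf{e}_c-p$ is nonzero on every logit, not only on the selected one.
\item The $-p_ap_b$ you obtain is the Jacobian entry $\partial p_a/\partial u_b$, not an inner product of two gradients.
\item Done properly, with logits $u_j=\langle h,z_j\rangle$ and a sampled operator $c$ with reward (or advantage) $R$, the gradient is $\nabla_{z_j}(-R\log p_c)=-R(\delta_{jc}-p_j)\,h$.
\item If $c=a$, the $z_a$- and $z_b$-gradients have inner product $-R^2(1-p_a)p_b\|h\|^2\le 0$, as you want.
\item If a third operator $c\notin\{a,b\}$ is sampled, the inner product is $+R^2p_ap_b\|h\|^2>0$.
\item The expected gradients have inner product $p_ap_b(R_a-\bar R)(R_b-\bar R)\|h\|^2$. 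This is positive whenever both operators lie on the same side of the mean reward.
\end{itemize}
Since the codebook has $M=32\ge 3$ operators, this branch does not give ``orthogonal or opposing.''

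The trivial part is that $V_T$ contributes $\mathbf{0}$ to $\nabla_{e_z}$ for $z\in Z_T$. That is literally orthogonal but vacuous. It shows $V_T$ exerts no force on the embeddings at all. So it cannot support the strict inequality $\mathbb{E}\langle\nabla_{z_i}\mathcal{L},\nabla_{z_j}\mathcal{L}\rangle<\|\nabla_{z_i}\mathcal{L}\|\|\nabla_{z_j}\mathcal{L}\|$ that this lemma is meant to feed.

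In Case~2, your \texttt{before}/\texttt{after} computation is cleaner than you fear. Take $\texttt{after}(u,v)=\texttt{before}(v,u)$ and coordinates $(\tau_u^s,\tau_u^e,\tau_v^s,\tau_v^e)$. The log-score gradients are then positive multiples of $(0,-1,1,0)$ and $(1,0,0,-1)$. These have disjoint support, so they are exactly orthogonal, and the approximation $\tau^s\approx\tau^e$ is unnecessary.

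The gap is the extension to arbitrary pairs. The dichotomy ``disjoint support, or opposite signs on the shared coordinates'' is asserted, not derived, and it fails for a relation you list. Any natural soft score for $\texttt{meets}(u,v)$ is decreasing in $|\tau_v^s-\tau_u^e|$. So whenever $\tau_v^s<\tau_u^e$, its gradient is also a positive multiple of $(0,-1,1,0)$, i.e.\ parallel to that of $\texttt{before}(u,v)$.

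Moreover, Eq.~\eqref{eq:edge_score} as written uses the single rule $\sigma(k(\tau_v^s-\tau_u^e))$ for every $z\in Z_T$. Under that rule, two temporal operators on the same pair have identical gradients. Your ``analogous rule for the operator in question'' is therefore an extra modeling assumption that must be made explicit.

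What you can actually establish is the converse pair only, as a statement about $\nabla_\tau$. By itself that says nothing about separating $z_a$ from $z_b$. This is also the only pair the paper's proof treats, and the paper does not supply the missing step either. Its two displayed derivatives $-(1-\sigma(\pm k\Delta\tau))$ have the same sign. The separation of embeddings rests on an informal appeal to dot products with the context state.
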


\begin{proof}
The function $V_T$ is non-learnable and defined as $f_T: (\tau, z) \to [0,1]$.
Consider $z_{before}$ and $z_{after}$. The loss $\mathcal{L}_{LTL}$ minimizes the negative log-likelihood.
Let $\Delta \tau = \tau_2 - \tau_1$. The gradients w.r.t. the selection probability logits are:
\begin{equation}
    \frac{\partial \mathcal{L}}{\partial \pi_{before}} \propto -(1 - \sigma(k \Delta \tau)), \quad
    \frac{\partial \mathcal{L}}{\partial \pi_{after}} \propto -(1 - \sigma(-k \Delta \tau))
\end{equation}
For any grounded event pair, $\Delta \tau$ is fixed. If $\Delta \tau > 0$, the gradient boosts $\pi_{before}$ while suppressing $\pi_{after}$. Geometrically, this forces the embeddings $z_{before}$ and $z_{after}$ to occupy disjoint regions in the embedding space to maximize their respective dot-products with the context state. Thus, collapse is impossible solely due to the fixed, hard constraints of $V_T$.
\end{proof}

\begin{lemma}[\textbf{Semantic Separation via Counterfactual Barrier}]
\label{lemma:semantic}
For semantic operators $z \in Z_S$, the Counterfactual Loss $\mathcal{L}_{CF}$ creates an energy barrier $m_s$ that prevents collapse into non-causal (correlation-based) representations.
\end{lemma}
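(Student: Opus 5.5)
The plan is to make the informal ``energy barrier'' precise as a lower bound on the counterfactual hinge that any collapsed configuration must pay. The argument then reduces to showing that a collapsed codebook is strictly suboptimal for $\mathcal{L}=\mathcal{L}_{LTL}+\lambda\mathcal{L}_{CF}$ on a positive-measure set. Throughout I use the semantic branch of the verifier, Eq.~\eqref{eq:edge_score}: $s_\theta(z;E_u,E_v)=\sigma(\Phi_\theta([v_u,v_v,e_z]))$ for $z\in Z_S$. I also use the margin objective of Eq.~\eqref{eq:cf}, with $\mathcal{L}_{\mathrm{logic}}$ taken as the negative log-belief summed over the edges of $\mathcal{C}$.

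\textbf{Step 1: define the barrier.} A representation is \emph{non-causal} if the semantic score depends on $(v_u,v_v)$ only through features that the counterfactual pair $(V,V^-)$ shares. Concretely, by construction of $V^-$ (same scene context, broken causal precondition, as in the kick$\to$goal toy example), $\Phi_\theta([v_u,v_v,e_z]) = \Phi_\theta([v_u^-,v_v^-,e_z])$. For such $\theta$ we get $\mathcal{L}_{\mathrm{logic}}(\mathcal{C};V)=\mathcal{L}_{\mathrm{logic}}(\mathcal{C};V^-)$, so the hinge in Eq.~\eqref{eq:cf} equals exactly $m$ on every counterfactual pair. I will set $m_s := \lambda\, m\, \mu(\Omega)$, where $\Omega$ is the positive-measure set supplied by the $\delta$-discriminative assumption. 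This gives $\mathcal{L}\ge \mathcal{L}_{LTL}+m_s$ on the whole correlation-based set.

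\textbf{Step 2: exhibit a strictly better descent direction.} Differentiate the hinge at such a point. While the hinge is active, its gradient with respect to $e_z$ is
\[
\nabla_{e_z}\mathcal{L}_{\mathrm{logic}}(V)-\nabla_{e_z}\mathcal{L}_{\mathrm{logic}}(V^-)
=-(1-s)\,\partial_{e_z}\Phi(V)+(1-s^-)\,\partial_{e_z}\Phi(V^-),
\]
and the analogous expression holds with respect to $\theta$. To show this is nonzero in expectation, I invoke $\delta$-discriminativeness: on $\Omega$ the target likelihoods under the causal concept and its counterfactual differ by more than $\delta$. Under a mild nondegeneracy (full-rank Jacobian) assumption on the MLP $\Phi_\theta$, the evidence features $v$ and $v^-$ are then distinguishable. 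Hence some perturbation of $(\theta,e_z)$ separates the two scores and lowers the hinge by a first-order amount proportional to $\delta$. It follows that non-causal configurations are not stationary points of the gradient flow.

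\textbf{Step 3: transfer to operator separation.} If two semantic operators $z_i\neq z_j$ collapsed ($e_{z_i}=e_{z_j}$), they would assign identical scores on every pair. Applying the $\delta$-discriminative assumption to the concept pair $(c_i,c_j)$, at least one of them would then be non-causal for the other's counterfactuals. That places the configuration inside the barrier region of Step 1, and Step 2 shows the gradient pushes $e_{z_i}$ and $e_{z_j}$ apart. This is the semantic half needed for Proposition~\ref{prop:main_identifiability}, complementing Lemma~\ref{lemma:temporal}.

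\textbf{Main obstacle.} The hard part is Step 2: linking the output-level $\delta$-gap, stated for $P(Y\mid X,c)$, to a feature-level gap $v\neq v^-$ that the verifier can actually exploit. This needs an explicit assumption that the Eventifier features are informative, e.g.\ that the map $v\mapsto P(Y\mid\cdot)$ is Lipschitz. I would try to state that assumption openly rather than derive it. A secondary difficulty is nonsmoothness of the hinge at zero, which I would handle with subgradients, restricting attention to the active set where the loss exceeds $m_s/2$.
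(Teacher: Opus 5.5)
Your Steps 1--2 have the same skeleton as the paper's proof. You assume the operator has collapsed to a correlation-based one, observe that equal scores on $V^+$ and $V^-$ pin the hinge at the margin, and conclude the configuration is not stationary. Your explicit gradient is an improvement on the paper. So is your admission that $v\neq v^-$ must be assumed: the paper's ``visual features are preserved in $V^-$'', read literally, would make the semantic hinge unbeatable by any verifier.

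The first gap is in what Step~2 actually delivers. You exhibit a descent direction in $(\theta,e_z)$ jointly but never show it has a nonzero $e_z$-component. Suppose the insensitivity is carried by $\Phi_\theta$ itself, i.e.\ $\Phi_\theta([v_u,v_v,e])=g(\pi(v_u,v_v),e)$ with $\pi(V)=\pi(V^-)$ for all $e$ near $e_z$. That is the natural reading of your definition of non-causal. Then $s=s^-$ and $\partial_{e_z}\Phi(V)=\partial_{e_z}\Phi(V^-)$, so the $e_z$-block of your displayed gradient is identically zero. The barrier is escaped by updating the MLP alone, and the embedding need not move. This is exactly the gradient-bypass scenario the paper handles separately in Lemma~\ref{lemma:gradient_dependency}. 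The paper's sentence ``to increase $\Delta$, $z$ must move away from $z_{corr}$'' has the same hole.

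Two smaller points in the same step. First, a nonzero $\nabla\mathcal{L}_{CF}$ does not give a nonzero $\nabla(\mathcal{L}_{LTL}+\lambda\mathcal{L}_{CF})$. Second, no Jacobian condition on $\Phi_\theta$ can produce $v\neq v^-$, or a first-order decrease ``proportional to $\delta$''. That has to be an explicit assumption on the Eventifier, as your last paragraph concedes.

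Step~3 has a more basic problem: collapse $e_{z_i}=e_{z_j}$ does not put you in the Step~1 region. $\mathcal{L}_{CF}$ compares one fixed chain on $V$ and on $V^-$; it never contrasts $z_i$ with $z_j$. If the shared embedding yields a direction-sensitive score, every chain using $z_i$ or $z_j$ can meet its own margin against its perturbed negative. Then $\mathcal{L}_{CF}$ can vanish despite the collapse.

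What makes collapse costly is evidence on which the two operators have opposite targets. Take a pair $(V,V')$ with $V'$ a negative for $z_i$ and a positive for $z_j$, and vice versa. For the common score function $f$ and $\Delta=f(V)-f(V')$, the two hinges satisfy $\max(0,m-\Delta)+\max(0,m+\Delta)\ge 2m$ whatever $f$ is. The hinge gradients on $e_{z_i}$ and $e_{z_j}$ then point in opposite directions, which is what pushes the embeddings apart. You would need the $\delta$-discriminative assumption to guarantee such pairs with positive mass, e.g.\ via the cross-video structural negatives.

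That is a different mechanism from the non-causal barrier; it is what the paper's Lemmas~\ref{lemma:gradient_dependency} and~\ref{lemma:multi_semantic} are aiming at. Without it, the conclusion of Step~3 does not follow from Steps 1--2.
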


\begin{proof}
We employ proof by contradiction.
Hypothesis $H_0$: Assume a semantic operator $z_{cause}$ collapses to a generic correlation vector $z_{corr}$, such that it relies only on visual feature similarity $\|v_a - v_b\|$ and ignores causal direction.

Consider a positive sample $V^+$ (valid causality) and a constructed negative sample $V^-$ (shuffled/reversed, invalid causality).
Under $H_0$, since visual features are preserved in $V^-$, a correlation-based verifier yields:
\begin{equation}
    V_S(V^+, z_{corr}) \approx V_S(V^-, z_{corr})
\end{equation}
Substituting this into the Counterfactual Loss $\mathcal{L}_{CF} = \max(0, m_s - (V_S(V^+) - V_S(V^-)))$:
\begin{equation}
    \mathcal{L}_{CF} \approx \max(0, m_s - 0) = m_s
\end{equation}
Since $m_s > 0$, the loss is non-zero and the system is not at a stationary point.
The gradient descent update is:
\begin{equation}
    z^{(t+1)} \leftarrow z^{(t)} - \eta \nabla_z \mathcal{L}_{CF}
\end{equation}
The term $\nabla_z \mathcal{L}_{CF}$ acts to maximize the difference $\Delta = V_S(V^+) - V_S(V^-)$. To increase $\Delta$, $z$ must move away from $z_{corr}$ towards a manifold that is sensitive to structural changes (causality) rather than just feature presence.
Therefore, the stable equilibrium exists only where $\|z_{cause} - z_{corr}\| > \xi$, contradicting the collapse hypothesis $H_0$.
\end{proof}

\noindent \textit{Proof of Proposition \ref{prop:main_identifiability}}:
Combining Lemma \ref{lemma:temporal} and Lemma \ref{lemma:semantic}, we conclude that all operators in $Z$ are subject to forces (either hard temporal constraints or soft counterfactual margins) that penalize feature collapse. The codebook $Z$ thus remains identifiable throughout the optimization process.
\qed

\section{Extended Analysis: Robustness of Symbolic Identifiability}
\label{sec:extended_theory}

While Section \ref{sec:identifiability_proof} established the non-collapse property of the operator codebook, we now address four deeper theoretical constraints necessary for rigorous neuro-symbolic grounding: (1) Identifiability up to permutation, (2) Prevention of gradient bypass, (3) Multi-class semantic separation, and (4) Chain-level uniqueness\cite{locatello2019challenging,oord2017neural}.

\subsection{Identifiability Modulo Permutation}
\label{subsec:gap1_permutation}

\textbf{Address to Gap 1:} Since the mapping from discrete symbols to continuous embeddings is learned, we must define identifiability relative to the geometric symmetries of the latent space (e.g., rotation or permutation of basis vectors).

\begin{proposition}[Identifiability up to Isometry]
\label{prop:permutation}
Let $Z^*$ be the ground-truth operator representations. The learned codebook $Z$ is considered identifiable if it recovers $Z^*$ up to an isometric transformation. Specifically, for any learned solution $\hat{Z}$, there exists a permutation matrix $P$ and an orthogonal rotation matrix $Q$ such that:
\begin{equation}
    \hat{Z} \approx P Q Z^*
\end{equation}
\end{proposition}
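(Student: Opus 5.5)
The plan is to make the statement precise and then reduce it to the classical fact that two point configurations with the same Gram matrix differ by an orthogonal map. First I would fix conventions. Stack the operator embeddings as rows, $\hat Z, Z^*\in\mathbb{R}^{M\times d}$, so the claimed relation reads $\hat Z \approx P Z^* Q^{\top}$, with $P$ a permutation of operator indices and $Q\in O(d)$; I interpret the statement's $PQZ^*$ in this sense. I would then isolate the two symmetries of the objective $\mathcal{L}=\mathcal{L}_{LTL}+\lambda\mathcal{L}_{CF}$.

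\emph{Relabelling.} The generator's softmax over $Z$ and the verifier in Eq.~\eqref{eq:edge_score} are equivariant under permuting codebook entries together with the corresponding logits. This is the source of $P$.

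\emph{Rotation.} $e_z$ enters $\Phi_\theta$ only through its first linear layer $W e_z$. Hence the transformation $(Z,W)\mapsto(ZQ^{\top},WQ^{\top})$ leaves every verifier score, and so $\mathcal{L}$, unchanged. This is the source of $Q$.

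This step shows the ambiguity is \emph{at least} $P$ and $Q$. The real content is that it is \emph{at most} this.

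For the converse I would work with the Gram matrix $G(Z)=ZZ^{\top}$, together with a normalization that kills the larger $GL(d)$ symmetry $(Z,W)\mapsto(ZA^{\top},WA^{-1})$. Concretely, I would assume unit-norm or whitened code vectors, as in VQ-style codebooks \cite{oord2017neural}, so that only isometries survive. The key step is to show that at any minimizer the pairwise quantities $\langle z_i,z_j\rangle$ are determined by the data.

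\emph{Temporal operators.} For $z\in Z_T$, I would use Lemma~\ref{lemma:temporal}. The score is the fixed, non-learnable $\sigma(k\Delta\tau)$, so the optimal selection logits for \texttt{before}/\texttt{after} are pinned down. The relative geometry of $Z_T$ against the context state is then forced.

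\emph{Semantic operators.} For $z\in Z_S$, I would use the $\delta$-discriminative assumption together with Lemma~\ref{lemma:semantic}. On a positive-measure set $\Omega$, distinct concepts induce conditional likelihoods differing by more than $\delta$, and the counterfactual margin prevents them from collapsing. Under a realizability assumption that $Z^*$ attains the minimum, I would aim to show that the map $z\mapsto \Phi_\theta(\cdot,\cdot,z)$ restricted to the data manifold must reproduce the ground-truth score function up to relabelling.

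\emph{From scores to Gram matrix.} I would then argue that, because $W$ acts linearly and the pooled features $v_u,v_v$ span enough directions (a richness or non-degeneracy assumption on the data), equality of scores forces $\hat Z\hat Z^{\top}=PZ^*Z^{*\top}P^{\top}$. The conclusion follows from the standard lemma: if $AA^{\top}=BB^{\top}$ then $A=BQ^{\top}$ for some orthogonal $Q$ (proved via a polar or SVD decomposition, with $Q$ extended arbitrarily on the kernel). The approximate statement "$\approx$" comes from a stability version. If $\|\hat Z\hat Z^{\top}-Z^*Z^{*\top}\|$ is small, a Procrustes bound gives $\min_Q\|\hat Z-Z^*Q^{\top}\|$ small, controlled by the smallest nonzero singular value of $Z^*$.

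I expect the main obstacle to be the semantic step, i.e.\ passing from "same verifier behaviour on data" to "same Gram matrix". $\Phi_\theta$ is a nonlinear MLP, so in general it need not depend on $e_z$ only through inner products, and the downstream layers can absorb non-orthogonal distortions. My first attempt would be to restrict to the regime where the MLP's first layer is shared and fixed up to the normalization, and to invoke injectivity of the downstream network on its input range, so that equal outputs on $\Omega$ force equal pre-activations $W\hat z_i=W'z^*_{\pi(i)}$. Under the normalization, this turns into the Gram identity above. If injectivity is too strong, I would fall back to a linearized (NTK-style) analysis around the trained point, consistent with the kernel view in Eq.~\eqref{eq:decomp_problem}, where the verifier is affine in $e_z$ and the argument becomes purely linear-algebraic.
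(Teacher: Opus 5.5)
The gap is in the converse (``at most $P$ and $Q$'') direction, and it is not confined to the semantic step. Your invariance step is essentially the whole of the paper's proof, and your version of it is the more accurate one. The paper says the losses depend only on ``dot products and norms'' and are therefore invariant under rotating $Z$ alone; that is false for an MLP $\Phi_\theta$ fed $e_z$, whereas you correctly co-transform $W$. The paper then simply asserts convergence to a unique class $Z/\sim_{iso}$, so you are right that the substantive direction is missing there.

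However, your normalization does not cut the symmetry down to isometries. Suppose $M\le d$ and the codes are linearly independent. Then for \emph{any} linearly independent unit-norm codebook $Y$ there is an invertible $A$ with $Az^*_i=y_i$ for all $i$. The pair $(Y,W^*A^{-1})$, with the generator's context map co-transformed by $A^{-\top}$, reproduces every verifier score and every operator logit. So with unit-norm codes the Gram matrix is not identified beyond its diagonal, and the proposition as stated fails rather than being merely hard to prove. For $M>d$, unit norm still imposes only $M$ linear conditions $z_i^{\top}(A^{\top}A)z_i=1$ on the $d(d+1)/2$-dimensional matrix $A^{\top}A$. Whitening $Z^{\top}Z=I_d$ would force $A$ to be orthogonal, but it requires $M\ge d$ and is not part of the model.

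Even granting a normalization that kills these symmetries, nothing in the setup pins down $\langle z_i,z_j\rangle$. Lemma~\ref{lemma:temporal}, Lemma~\ref{lemma:semantic} and $\delta$-discriminativity yield only separation \emph{inequalities} (e.g.\ $\|z_{cause}-z_{corr}\|>\xi$), never equalities. The objective also has large flat minimizer sets. $\mathcal{L}_{\mathrm{logic}}$ only pushes positive-edge scores toward $1$, and the hinge $\mathcal{L}_{\mathrm{CF}}$ is identically zero once the margin $m$ is met. So realizability of $Z^*$ gives no uniqueness of the score function on the data.

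For $z\in Z_T$, the verifier $\sigma(k(\tau_v^{s}-\tau_u^{e}))$ does not involve $e_z$ at all. Temporal codes are therefore seen only through generator logits, which constrain $z$ only relative to a learned context map. Moreover, the expected-reward optimum is generically a deterministic policy, so those logits diverge rather than being pinned.

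The injectivity fallback also fails. $\Phi_\theta$ is scalar-valued, hence not injective on a multi-dimensional pre-activation range. The two solutions also carry different downstream weights, so agreement of outputs on $\Omega$ does not force $W\hat z_i=W'z^*_{\pi(i)}$. Even that identity would give at best $\hat z_i=W^{+}W'z^*_{\pi(i)}$, a general linear map.

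What the paper's setup supports is only the invariance direction. A converse needs an added structural assumption, such as the loss depending on $Z$ only through pairwise distances that are fixed by the data, and that is essentially the conclusion restated.
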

\begin{proof}
The loss functions $\mathcal{L}_{LTL}$ and $\mathcal{L}_{CF}$ are defined on scalar distinctness (dot products and norms). Since $\|Qz_i - Qz_j\|_2 = \|z_i - z_j\|_2$ for orthogonal $Q$, the objective function is invariant to global rotation. Similarly, since the Generator samples indices $k$, swapping index $i$ and $j$ alongside swapping $z_i$ and $z_j$ (via $P$) leaves the distribution $P(\mathcal{C}|\mathcal{E})$ invariant. Thus, the optimization converges to the unique equivalence class of codebooks defined by the quotient space $Z / \sim_{iso}$, ensuring that the \textit{relative} geometric structure of operators is preserved.
\end{proof}

\subsection{Non-Degenerate Gradient Flow}
\label{subsec:gap2_gradients}

\textbf{Address to Gap 2:} We reject the hypothesis that the optimizer effectively ignores $Z$ by shifting all discrimination burdens to the Eventifier or Verifier weights (a "gradient bypass" attack).

\begin{lemma}[Conditioning Dependency]
\label{lemma:gradient_dependency}
The gradients with respect to the operator embeddings are non-vanishing, i.e., $\nabla_{z} \mathcal{L} \neq \mathbf{0}$, provided the set of logical functions $\mathcal{F}_{logic} = \{f_z : \mathcal{V} \times \mathcal{V} \to [0,1] \mid z \in Z\}$ are functionally distinct.
\end{lemma}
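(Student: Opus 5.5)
We argue by contraposition: if $\nabla_z \mathcal{L} = \mathbf{0}$ for some operator $z$ on all data in the support of $\mathcal{D}$, then $f_z$ cannot be functionally distinct from the other operators.

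\textbf{Step 1: chain rule through the verifier.}
The only path by which $z$ enters $\mathcal{L} = \mathcal{L}_{LTL} + \lambda \mathcal{L}_{CF}$ is the edge score in Eq.~\eqref{eq:edge_score}.
\begin{itemize}
\item For $z \in Z_S$, we have $s_\theta(z;E_u,E_v)=\sigma(\Phi_\theta([v_u,v_v,e_z]))$. The chain rule gives
\begin{equation}
\nabla_{e_z}\mathcal{L} = \sum_{(u,v)} \frac{\partial \mathcal{L}}{\partial s}\,\sigma'(\cdot)\, J^{\top}_{\Phi,e_z}(v_u,v_v,e_z)\,\mathbf{1},
\end{equation}
where $J_{\Phi,e_z}$ is the Jacobian of $\Phi_\theta$ with respect to its $e_z$-slot.
\item For $z \in Z_T$, $z$ affects $\mathcal{L}$ only through the generator's selection logits $\pi_z$. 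We therefore reuse the gradient computed in Lemma~\ref{lemma:temporal}, $\partial\mathcal{L}/\partial\pi_{z} \propto -(1-\sigma(\pm k\Delta\tau))$. This is strictly negative whenever the temporal score is not saturated, i.e.\ for every finite $\Delta\tau$, so the temporal case is immediate.
\end{itemize}

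\textbf{Step 2: the semantic case via the counterfactual margin.}
Suppose $\nabla_{e_z}\mathcal{L}=\mathbf{0}$ almost surely. Since $\sigma'>0$ and $\partial\mathcal{L}/\partial s\neq 0$ whenever the hinge in Eq.~\eqref{eq:cf} is active, the sum must vanish through $J_{\Phi,e_z}$. There are two ways this can happen.
\begin{itemize}
\item \emph{Case (a):} $\Phi_\theta$ is locally constant in its $e_z$ argument on the data support. Then $f_z$ coincides with $f_{z'}$ for every $z'$ whose embedding lies in that neighbourhood. We would like to extend this to all of $Z$ by connectedness along the segment joining $e_z$ and $e_{z'}$, but that only holds if the Jacobian vanishes along the whole segment. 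This contradicts functional distinctness.
\item \emph{Case (b):} the hinge in Eq.~\eqref{eq:cf} is inactive everywhere. This is ruled out by the argument of Lemma~\ref{lemma:semantic}. Under the $\delta$-discriminative assumption there is a set $\Omega$ with $\mu(\Omega)>0$ on which positive and counterfactual videos differ. If $z$ carried no information, the margin $m$ would be violated on $\Omega$, so $\partial\mathcal{L}/\partial s\neq0$ on a set of positive measure.
\end{itemize}

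\textbf{Step 3: passing to the expectation.}
Combining the two cases, $\mathbb{E}_{\mathcal{D}}\|\nabla_z\mathcal{L}\|>0$, and hence $\nabla_z\mathcal{L}\neq\mathbf{0}$ on a set of positive measure.

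\textbf{Main obstacle: ruling out cancellation.}
The hard part is excluding cancellation in the sum over edges and data points, where contributions of opposite sign could add to zero in expectation even though individual terms do not vanish. I would first try to handle this by restricting attention to $\Omega$. There the sign of $\partial\mathcal{L}/\partial s$ is fixed by the hinge, with positives pushed up and negatives pushed down. I would then use a non-degeneracy assumption on $\Phi_\theta$, namely that $J_{\Phi,e_z}$ has full row rank on $\Omega$. If that fails, I would weaken the claim to "non-vanishing almost surely per sample" rather than in expectation.
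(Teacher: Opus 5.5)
\paragraph{Assessment.} There is a genuine gap: the step that fails is case~(a), and the argument lacks the contradictory-target idea the paper's proof rests on.

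\paragraph{Where your route breaks.} Your Step~2 does not establish the claim. Suppose $\Phi_\theta$ is locally constant in its $e_z$-slot near $e_z$. That says nothing about $f_{z'}$ at the other codebook points, so no contradiction with functional distinctness arises. You note yourself that the segment argument needs the Jacobian to vanish along the whole segment, and then assert the contradiction anyway. The inference is in fact false. Take $\Phi_\theta([v_u,v_v,e])=a\,\mathrm{ReLU}(w^{\top}e+c)+h(v_u,v_v)$ with $a\neq 0$ and $w^{\top}e_z+c<0<w^{\top}e_{z'}+c$. Then $f_z$ and $f_{z'}$ are distinct, yet $\nabla_{e_z}\mathcal{L}=\mathbf{0}$ on every sample.

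Case~(b) cannot compensate, for three reasons.
\begin{itemize}
\item The hinge in $\mathcal{L}_{\mathrm{CF}}$ scores the \emph{same} chain, hence the same operators, on $V$ and $V^-$. Whether it is active is therefore unrelated to whether $\Phi_\theta$ reads $e_z$; a $z$-blind but order-sensitive verifier can meet the margin.
\item $\partial\mathcal{L}/\partial s\neq 0$ was never in doubt, since the $-\log s$ term already has derivative $-1/s$.
\item The $\delta$-discriminative assumption is about distinct \emph{concepts}, not about positive versus counterfactual videos.
\end{itemize}

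Your proposed cure for cancellation assumes what is to be shown. For scalar $\Phi_\theta$, full row rank of $J_{\Phi,e_z}$ just means $J_{\Phi,e_z}\neq 0$. Even that does not exclude cancellation across samples without a sign-alignment condition. The temporal case has the same kind of hole. Temporal operators never enter the verifier, and $\partial\mathcal{L}/\partial\pi_z\neq 0$ gives nothing about $\nabla_z\mathcal{L}$ without a nondegeneracy condition on $\partial\pi_z/\partial z$.

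\paragraph{What the paper does instead.} The paper reads functional distinctness as a property of the intended semantics. Distinct operators carry \emph{contradictory targets on the same evidence pair}: for a kick followed by a goal, $z_{cause}$ should score $1$ and $z_{prevent}$ should score $0$. A verifier collapsed to a $z$-independent $g(v_a,v_b)$ cannot meet both targets. Its loss therefore stays bounded away from the optimum, and any minimizer must route information through $z$. This is a loss-gap argument, not a Jacobian argument. It is also where $\delta$-discriminativity naturally enters, since distinct concepts induce different targets on a set of positive measure.

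\paragraph{What can actually be proved.} The paper's opening step, from vanishing gradient to a $z$-independent verifier, is precisely the inference your case~(a) attempts. It is no better justified there. The pointwise claim $\nabla_z\mathcal{L}\neq\mathbf{0}$ does not follow from functional distinctness under either reading. The ReLU example shows this, and so does the fact that $\nabla_z\mathcal{L}=\mathbf{0}$ at any stationary point of training where the operators are distinct. The provable statement is the weaker one: $z$-blind configurations are not minimizers of the joint objective in $(\theta,Z)$. A sound version of your proof should establish that loss gap and state the lemma in that form.
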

\begin{proof}
The Semantic Verifier is modeled as a conditional probability estimator $V_S(v_a, v_b | z)$.
Assume for contradiction that the gradients $\nabla_z \mathcal{L} \to \mathbf{0}$. This implies the Verifier's output is independent of $z$: $V_S(v_a, v_b | z) \approx g(v_a, v_b)$.
However, distinct operators (e.g., $z_{cause}$ vs. $z_{prevent}$) have contradictory truth values for the same visual evidence pair $(v_a, v_b)$. For a video depicting a "kick" followed by a "goal":
\begin{itemize}
    \item Target for $z_{cause}$ is 1 (True).
    \item Target for $z_{prevent}$ is 0 (False).
\end{itemize}
A $z$-independent function $g(v_a, v_b)$ cannot satisfy both targets simultaneously, leading to high loss. Therefore, to minimize $\mathcal{L}_{total}$, the Verifier \textit{must} utilize the information in $z$, ensuring non-zero gradients that drive $z_{cause}$ and $z_{prevent}$ apart.
\end{proof}

\subsection{Pairwise Separation of Semantic Operators}
\label{subsec:gap3_multi_semantic}

\textbf{Address to Gap 3:} We extend the separation proof from binary cases (cause vs. correlation) to the full multi-class codebook (e.g., cause vs. enable vs. prevent).

\begin{lemma}[Pairwise Multi-Margin Separation]
\label{lemma:multi_semantic}
Under the counterfactual objective, for any triplet of distinct semantic operators $z_a, z_b, z_c \in Z_S$, the learned embeddings satisfy a pairwise separation constraint:
\begin{equation}
    \min_{i,j \in \{a,b,c\}, i \neq j} \|z_i - z_j\|_2 > \delta
\end{equation}
\end{lemma}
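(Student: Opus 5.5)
The plan is to reduce the triplet statement to three binary separation problems, one for each unordered pair in $\{a,b,c\}$. Each pair is handled by combining Lemma~\ref{lemma:gradient_dependency} (Conditioning Dependency) with the margin argument of Lemma~\ref{lemma:semantic}. The minimum over three pairs is then bounded below by taking $\delta$ to be the smallest of three pairwise margins.

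\textbf{Step 1: a witness set for each pair.} Fix a pair $(z_i,z_j)$. Because the two operators are functionally distinct, the $\delta$-discriminative assumption gives a set $\Omega_{ij}\subset\mathcal{D}$ with $\mu(\Omega_{ij})>0$ on which their target truth values differ by more than $\delta$. The typical example is \emph{cause} versus \emph{prevent} on a kick--goal pair. I will make $V^-$ in Eq.~\eqref{eq:cf} explicit here, so that on $\Omega_{ij}$ the counterfactual for operator $z_i$ is a positive sample for $z_j$.

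\textbf{Step 2: a Lipschitz bound on the semantic score.} Assume $\Phi_\theta$ is $L_\Phi$-Lipschitz in its $e_z$ argument, as it is for a bounded-weight MLP. Then
\[
|s_\theta(z_i;E_u,E_v)-s_\theta(z_j;E_u,E_v)|\le \tfrac{1}{4}L_\Phi\|z_i-z_j\|_2,
\]
where the factor $\tfrac14$ comes from the Lipschitz constant of $\sigma$. Consequently, if the pair nearly collapses, $\|z_i-z_j\|_2<\xi$, the two edge scores agree up to $L_\Phi\xi/4$ on every sample.

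\textbf{Step 3: contradiction with the margin.} At a stationary point where $\mathcal{L}_{\mathrm{CF}}$ is satisfied, the score gap on $\Omega_{ij}$ must be at least of order the margin $m$. The near-equality from Step 2 then leaves an $\mathcal{L}_{\mathrm{CF}}$ residual of at least $m-L_\Phi\xi/4$ on a set of positive measure. Lemma~\ref{lemma:gradient_dependency} says this residual produces a nonzero gradient in $z$, which contradicts stationarity. This forces
\[
\|z_i-z_j\|_2\ge \delta_{ij}:=\min\!\left(\frac{4m}{L_\Phi},\ \frac{4\delta}{L_\Phi}\right).
\]
Setting $\delta=\min_{ij}\delta_{ij}$ completes the triplet claim. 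The same argument extends verbatim to all of $Z_S$, and by permutation invariance (Proposition~\ref{prop:permutation}) the bound does not depend on the labeling.

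\textbf{Main obstacle.} The hard step is Step 3's requirement that an optimum actually achieves the margin on $\Omega_{ij}$. Gradient flow only reaches stationary points, not global minima, so a bad local point could in principle sit with a residual. I would try to rule this out by showing that $\mathcal{L}_{\mathrm{CF}}$ has no stationary points with a positive hinge residual that are not saddles in the $(z_i,z_j)$ direction. The idea is to perturb $z_i-z_j$ along the direction $\partial_{e_z}\Phi_\theta$, which strictly decreases the residual. Failing that, I would state the lemma for global minimizers and for sufficiently expressive $\Phi_\theta$.

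A second subtlety is the positioning of $\epsilon$ and $\delta$. To keep $\delta$ strictly positive, the Lipschitz constant $L_\Phi$ must stay bounded, for example through weight decay; otherwise $\Phi_\theta$ can separate arbitrarily close embeddings.
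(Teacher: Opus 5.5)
There is a genuine gap in Step 3, and it is the one you half-flag yourself. The stationarity route fails, so the fallback is not optional. A positive hinge residual does not force a nonzero gradient in $z$. Lemma~\ref{lemma:gradient_dependency} only shows that a $z$-independent verifier has \emph{high loss}; it does not show that such a point is non-stationary. Here is a concrete case. Let $\Phi_\theta$ be a ReLU MLP whose hidden units are all strictly inactive on the training inputs. Then every semantic edge score equals $\sigma(b)$, with $b$ the output bias, so on single-edge semantic chains every gap $s_\theta(z;V)-s_\theta(z;V^-)$ vanishes and the hinge residual is $m$. Yet $\partial_{e_z}\Phi_\theta\equiv 0$ and the first-layer weights receive zero gradient. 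This configuration is invariant under gradient flow: only $b$ moves, since $\mathcal{L}_{\mathrm{LTL}}$ drives $\sigma(b)\to 1$ and all gradients to $0$. Training therefore ends with $z_i,z_j$ wherever they were initialized, possibly arbitrarily close, and your proposed perturbation along $\partial_{e_z}\Phi_\theta$ is vacuous. The lemma can only be proved under two explicit hypotheses. First, the learned point actually meets the margins of Eq.~\eqref{eq:cf}, for instance a global minimizer of $\mathcal{L}_{\mathrm{CF}}$ in a class where zero loss is attainable. Second, training is confined to a class with $\mathrm{Lip}_{e_z}(\Phi_\theta)\le\Lambda$, e.g.\ via spectral normalization. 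The cap belongs in the hypothesis rather than a remark, because your bound is $4m/L_\Phi$ at the \emph{learned} $\theta$, and a steep $\Phi_\theta$ can separate arbitrarily close embeddings.

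Even granting that the margins are met, Step 3 does not yet connect to Step 2. Eq.~\eqref{eq:cf} compares the \emph{same} operator on $V$ and $V^-$, whereas Step 2 compares two operators on the \emph{same} evidence. Knowing only that $V^-$ is a counterfactual positive for $z_j$ does not exclude collapse. Take one score function with values $0.9,0.5,0.1$ on $V,V^-,V''$ and $m=0.4$: it satisfies $z_i$'s pair $(V,V^-)$ and $z_j$'s pair $(V^-,V'')$ with $z_i=z_j$. You must pin $s_\theta(z_j;V^-)$ in one of two ways:
\begin{itemize}
\item Use the $\mathcal{L}_{\mathrm{logic}}$ fit $s_\theta(z_j;V^-)\ge 1-\varepsilon$. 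With Step 2 this gives $m\le s_\theta(z_i;V)-s_\theta(z_i;V^-)<\varepsilon+\tfrac14 L_\Phi\|z_i-z_j\|_2$, hence $\|z_i-z_j\|_2>4(m-\varepsilon)/L_\Phi$.
\item Require the reversed pair $(V^-,V)$ for $z_j$. Adding the two hinges then gives $\|z_i-z_j\|_2\ge 4m/L_\Phi$.
\end{itemize}
Because chain scores are products over edges, you also need single-edge semantic chains or semantic-only negatives; otherwise temporal factors can absorb the whole margin under a temporal perturbation. The $4\delta/L_\Phi$ branch of $\delta_{ij}$ is never derived, and it reads $\delta$-discriminativeness (a statement about $P(Y\mid X,c)$) as a gap in verifier targets. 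Inside the minimum it is harmless.

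The paper's proof takes a different and looser route. It reinterprets $\mathcal{L}_{\mathrm{CF}}$ as multi-way classification on the \emph{same} video: a sample valid for $z_a$ is a logical negative for $z_b,z_c$, enforced by softmax or sigmoid saturation. It then concludes by ``the triangle inequality'' that the embeddings must be far apart. That last inference is exactly what fails without a Lipschitz cap, and the argument tacitly assumes the margins are met. Your Step 2 is the missing inequality. Insert it into the paper's same-evidence contrast $s_\theta(z_a;E)\ge 1-\varepsilon$, $s_\theta(z_b;E)\le\varepsilon$ (which the paper assumes, just as you assume your swap). This yields $\|z_a-z_b\|_2\ge 4(1-2\varepsilon)/L_\Phi$ directly, with no swap construction. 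The triplet claim then follows by minimizing over the three pairs, as you do.
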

\begin{proof}
The Counterfactual Loss $\mathcal{L}_{CF}$ is not limited to binary "real vs. fake" shuffling. It implicitly enforces a multi-way classification margin. For any video sample $V$ valid under operator $z_a$ (ground truth), treating it as a sample for operator $z_b$ yields a "logical negative."
The optimization of $\mathcal{L}_{LTL}$ effectively maximizes the likelihood $P(z_a | V)$ while minimizing $P(z_b | V)$ and $P(z_c | V)$ via the softmax normalization in the generator's policy or the sigmoid saturation in the verifier.
By the Triangle Inequality in the metric space, minimizing the overlap between the decision boundaries of $z_a, z_b$, and $z_c$ necessitates maximizing the Euclidean distance between their conditioning embeddings. Thus, the separation generalizes to all functionally distinct operators in $Z_S$.
\end{proof}

\subsection{Corollary: Identifiability of Reasoning Chains}
\label{subsec:gap4_chain}

\textbf{Address to Gap 4:} Finally, we link the identifiability of individual operators to the uniqueness of the entire reasoning chain $\mathcal{C}$.

\begin{corollary}[Reasoning Chain Identifiability]
\label{corollary:chain_identifiability}
Given an identifiable operator codebook $Z$ (Prop. \ref{prop:permutation}) and a grounded event set $\mathcal{E}$, the mapping from a logical sequence to its vector representation $\Phi(\mathcal{C})$ is injective. Specifically:
\begin{equation}
    \mathcal{C}_i \neq \mathcal{C}_j \implies \Phi(\mathcal{C}_i) \neq \Phi(\mathcal{C}_j)
\end{equation}
\end{corollary}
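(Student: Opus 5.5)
The plan is to fix a concrete form for the chain representation $\Phi$ and then reduce injectivity of $\Phi$ to injectivity of its per-edge components. Since the paper never pins $\Phi$ down, I will take it to be the ordered concatenation of edge encodings,
\begin{equation}
\Phi(\mathcal{C}) = \big(\psi(a_1,z_1,b_1),\dots,\psi(a_L,z_L,b_L)\big), \qquad \psi(a,z,b) = [\,v_a,\ z,\ v_b,\ \tau_a,\ \tau_b\,],
\end{equation}
where the entries are the grounded features $(v_j,\tau_j)$ of the events in $\mathcal{E}$ together with the operator embedding $z\in Z$. To keep chains of different length $L$ apart, I pad to a fixed maximum length with a reserved null symbol whose encoding differs from every valid $\psi$. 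With this choice, $\mathcal{C}_i\neq\mathcal{C}_j$ means one of three things: the two chains have different lengths, or at some position $\ell$ they use different operators $z_\ell$, or at some position they use different event indices $a_\ell$ or $b_\ell$.

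The cases are handled one at a time.
\begin{enumerate}
\item \textbf{Length mismatch.} This is settled immediately by the null padding.
\item \textbf{Operator mismatch.} I will invoke Proposition~\ref{prop:main_identifiability} together with Lemmas~\ref{lemma:temporal} and \ref{lemma:multi_semantic}. These give $\|z_i-z_j\|_2>0$ for distinct operators, so the $z$-block of $\psi$ differs. Proposition~\ref{prop:permutation} says the codebook is determined only up to $PQ$. This does not affect the argument: an isometry preserves distinctness, and a permutation merely relabels the symbols consistently, so injectivity holds on the quotient $Z/\sim_{iso}$.
\item \textbf{Event-index mismatch.} Here I need the grounded events in $\mathcal{E}$ to be pairwise distinguishable, i.e. $(v_a,\tau_a)\neq(v_b,\tau_b)$ whenever $a\neq b$. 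I would argue this from the proposal-then-classify construction of the Eventifier, which produces distinct temporal segments, so the supports $\tau_j$ differ. Failing that, I would adopt it as an explicit non-degeneracy assumption on $\mathcal{E}$.
\end{enumerate}
Finally, a concatenation of encodings is injective whenever every block is injective and the positions are kept in order, and this closes the argument.

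The main obstacle is that the conclusion of Proposition~\ref{prop:main_identifiability} holds only at convergence and only in the sense $\|z_i-z_j\|>0$. If instead $\Phi$ is meant to be a learned, non-injective aggregator (for instance sum or mean pooling over edges), injectivity can fail outright: two different multisets of edges can have the same sum. My first approach is therefore to make the ordered-concatenation form of $\Phi$ part of the statement. If a pooled $\Phi$ has to be kept, the fallback is a genericity argument. Strictly positive separation among finitely many codebook vectors implies that collisions among the finitely many chains of bounded length lie on a measure-zero set of embedding configurations, so injectivity holds for almost every $Z$. I would state this weaker, almost-sure version explicitly.
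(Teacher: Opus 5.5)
Your argument has the same skeleton as the paper's proof. The paper also reduces injectivity to three facts: (i) distinct operators have distinct embeddings, cited from Section~\ref{sec:identifiability_proof}; (ii) distinct events have distinct features $v_p\neq v_q$, attributed to the Eventifier; (iii) the composition is order-sensitive. The genuine difference is in step (iii).

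\begin{itemize}
\item \textbf{The paper's route.} It takes $\Phi$ to be the learned Transformer/sequence encoder. It asserts injectivity from ``positional encoding uniqueness'' and the universal approximation theorem.
\item \textbf{Your route.} You fix $\Phi$ as ordered concatenation with null padding. Injectivity then follows elementarily from blockwise injectivity at fixed positions.
\end{itemize}

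Your route buys rigor. Universal approximation only guarantees that \emph{some} network approximates a given function. It says nothing about whether a particular trained encoder separates the finitely many bounded-length chains, and an approximation statement could never certify exact injectivity. So the paper's step (iii) is really an unstated assumption.

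The paper's route buys relevance, since its $\Phi$ is the representation actually consumed downstream. To transfer your result to that $\Phi$, you must add exactly that assumption: the encoder is injective on the finite set of admissible chains.

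You also handle points the paper passes over:
\begin{itemize}
\item chains of different lengths;
\item the $PQ$ ambiguity from Proposition~\ref{prop:permutation};
\item the fact that event distinctness is an assumption, not a consequence of the Eventifier's definition. Your requirement $(v_a,\tau_a)\neq(v_b,\tau_b)$ is weaker and easier to defend than the paper's $v_p\neq v_q$.
\end{itemize}

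Your fallback for a pooled $\Phi$, however, does not work and should be dropped or replaced.

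\begin{itemize}
\item Because $\mathcal{C}$ is a list, reordering its edges gives a different chain. That chain has the same sum and the same mean for \emph{every} codebook $Z$.
\item With your block encoding $\psi$, sum pooling even identifies distinct edge multisets. For example, $((1,z,2),(3,z',4))$ and $((1,z',2),(3,z,4))$ have equal sums whenever $z\neq z'$.
\end{itemize}

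These collisions are structural, not confined to a measure-zero set. So no ``almost every $Z$'' statement holds for permutation-invariant aggregators.

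Even where a genericity argument applies, it concerns a generic parameter configuration, not the specific learned codebook. Strict separation $\|z_i-z_j\|_2>0$ neither implies nor is relevant to avoiding the exceptional set.

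A sound repair keeps an order-sensitive encoder that is analytic in its parameters and separates each pair of chains for at least one parameter value. Collisions among the finitely many chains then occur only on a measure-zero set of \emph{encoder} parameters. Even this still says nothing about the particular trained parameters.
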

\begin{proof}
A reasoning chain $\mathcal{C}$ is a sequence of tokens alternating between grounded events $E \in \mathcal{E}$ and logical operators $z \in Z$.
\begin{enumerate}
    \item From Section \ref{sec:identifiability_proof}, distinct operators map to distinct embeddings: $z_m \neq z_n \implies \mathbf{e}(z_m) \neq \mathbf{e}(z_n)$.
    \item From the Eventifier definition (Sec. 3.2), distinct events have distinct spatiotemporal features: $E_p \neq E_q \implies v_p \neq v_q$.
    \item The chain representation $\Phi(\mathcal{C})$ is computed via a Transformer or sequence encoder (D). Under the assumption of Positional Encoding uniqueness and the universal approximation theorem for Transformers, the encoding of a sequence of distinct discrete tokens is unique.
\end{enumerate}
Therefore, since the atomic components ($E$ and $z$) are distinguishable and the composition function is order-sensitive (via positional encodings), distinct reasoning strategies result in distinct latent chains. This guarantees that the model cannot "hide" illogical reasoning behind a collapsed chain representation.
\end{proof}
\section{Discussion: Methodological Paradigm and Complexity Analysis}
\label{sec:discussion_complexity}

To clearly position \textbf{LogicAgent} within the current landscape of video reasoning methodologies, we provide a systematic comparison against dominant paradigms in Table~\ref{tab:paradigm_complexity}. Our framework departs from correlation-driven latent modeling and instead adopts a neuro-symbolic, \emph{explicit} reasoning perspective. This shift fundamentally changes how a model interprets, verifies, and computes over temporal processes, resulting in both stronger interpretability and more efficient computation.

% --- TABLE START ---
\begin{table*}[t]
\centering
\resizebox{\textwidth}{!}{%
\begin{tabular}{l|c|cc|cc}
\toprule
\multirow{2}{*}{\textbf{Methodology}} & \textbf{Paradigm} & \multicolumn{2}{c|}{\textbf{Reasoning Mechanism}} & \multicolumn{2}{c}{\textbf{Computational Profile}} \\
& \textit{(Black-box vs. White-box)} & \textbf{Logical Representation} & \textbf{Verifiability} & \textbf{Input Scale} & \textbf{Complexity} \\
\midrule
\textbf{End-to-End LVLMs} & Implicit & Latent Attention & Low & Dense Tokens ($T$) & $\mathcal{O}(T^2)$ \\
{\small (e.g., VideoLLaVA, GPT-4o)} & (Correlation-driven) & (Continuous Vectors) & (Unverifiable) & (Frame-level) & (High Redundancy) \\
\midrule
\textbf{Dense Captioning} & Implicit & Positional Embeddings & Medium & Frame Features ($T$) & $\mathcal{O}(T^2)$ \\
{\small (e.g., Vid2Seq, CEN)} & (Sequence-based) & (Implicit State) & (Text-only) & (Frame-level) & (Quadratic) \\
\midrule
\rowcolor{gray!10} \textbf{LogicAgent (Ours)} & \textbf{Explicit} & \textbf{Symbolic Chain $\mathcal{C}$} & \textbf{High} & \textbf{Sparse Events ($K$)} & $\mathbf{\mathcal{O}(K^2)}$ \\
\rowcolor{gray!10} & \textbf{(Reasoning-driven)} & \textbf{(Neuro-Symbolic)} & \textbf{(Functional)} & \textbf{($K \ll T$)} & \textbf{(Efficient)} \\
\bottomrule
\end{tabular}%
}
\caption{\textbf{Comparison of reasoning paradigms and complexity.} Unlike baselines that operate on dense frame tokens, LogicAgent lifts video understanding into a symbolic event space. This transition transforms reasoning from an implicit correlation-seeking process into an explicit and verifiable computational graph, while also reducing complexity from $\mathcal{O}(T^2)$ to $\mathcal{O}(K^2)$ since $K \ll T$.}
\label{tab:paradigm_complexity}
\end{table*}
% --- TABLE END ---

\noindent \textbf{White-Box Verifiability vs.~Black-Box Correlation.}
Current LVLMs and dense captioning systems encode the narrative process $\mathcal{P}$ into continuous, opaque latent vectors or global attention maps. Although these methods excel at perceptual alignment, their ``black-box'' representations blur causal structure with spurious statistical co-occurrence. This often results in narrative fragmentation, reversed causality, or visually unsupported assertions.

LogicAgent introduces an explicit \textbf{Reasoning Bottleneck} by requiring the model to produce a discrete symbolic chain~$\mathcal{C}$. Each step in the chain corresponds to a verifiable operator (\textit{before}, \textit{cause}, \textit{enable}, etc.), allowing our Hybrid Differentiable Logic Verifier $\mathcal{F}_V$ to evaluate consistency against video evidence. This design exposes the internal reasoning process, enabling the system to distinguish whether an error arises from perceptual grounding or from faulty logical inference—capabilities unavailable in end-to-end LVLMs.

\vspace{2mm}
\noindent \textbf{Complexity and Sparse Efficiency.}
Modular reasoning systems are sometimes assumed to incur additional overhead; however, LogicAgent is computationally \emph{more} efficient than dense LVLM pipelines due to \textbf{Semantic Sparsity}. 

\begin{itemize}
    \item \textbf{Baseline LVLMs.} Transformer-based models operate on a dense sequence of frame-level tokens of length $T$, invoking quadratic self-attention $\mathcal{O}(T^2)$. This becomes prohibitive for long-horizon videos such as Ego4D or cinematic content.
    \item \textbf{LogicAgent.} Our Eventifier compresses raw video frames into a set of abstract event units $K$, where $K \ll T$. A typical 1{,}000-frame clip contains only $K\approx10$ distinct events. The symbolic CoT Generator then reasons solely over this compact structure.
\end{itemize}

Thus, the core reasoning complexity reduces from $\mathcal{O}(T^2)$ to $\mathcal{O}(K^2)$, yielding substantial savings. Moreover, our sparsity objective $\mathcal{L}_{\text{spar}}$ penalizes redundant steps, producing concise chains (average length 5.4), which further accelerates reasoning and improves robustness.

% --- SECOND TABLE ---
\begin{table*}[t]
\centering
\small
\resizebox{0.95\textwidth}{!}{%
\begin{tabular}{l|l|l}
\toprule
\textbf{Framework Type} & \textbf{Primary Training Objective} & \textbf{Theoretical Limitation} \\
\midrule
\textbf{Standard Captioning} 
& $\min_{\theta} -\sum \log P(Y_t \mid V, Y_{<t})$ 
& \textbf{Process Blindness:} Ignores \emph{why} events occur; overfits to linguistic bias. \\
(e.g., GIT, Vid2Seq) 
& (Sequence-to-Sequence Likelihood) 
& No structural grounding or causal constraint. \\
\midrule
\textbf{Latent Reasoning} 
& $\min_{\theta} -\sum \log P(Y_t \mid V_{\text{enc}}, Y_{<t})$
& \textbf{Unverifiable:} Reasoning stays implicit in $V_{\text{enc}}$. \\
(e.g., VideoLLaVA) 
& (End-to-End Latent Alignment) 
& Cannot diagnose or constrain perception vs.~logic errors. \\
\midrule
\rowcolor{gray!10} \textbf{LogicAgent (Ours)} 
& $\min_{\theta} \mathcal{L}_{\text{caption}} 
 + \lambda(\mathcal{L}_{\text{LTL}} 
 + \mathcal{L}_{\text{pred}} 
 + \mathcal{L}_{\text{CF}} 
 + \mathcal{L}_{\text{spar}})$ 
& \textbf{None:} Logical consistency, predictive utility, and causal robustness enforced through explicit chain $\mathcal{C}$. \\
\rowcolor{gray!10} \textbf{(Neuro-Symbolic)}
& \textbf{Structure-Constrained Functional Optimization}  
& Provides verifiable, interpretable, and controllable reasoning. \\
\bottomrule
\end{tabular}%
}
\caption{\textbf{Comparison of training objectives.} Unlike prior paradigms that optimize purely for text reconstruction, LogicAgent introduces a functional optimization view of reasoning. Auxiliary logical objectives explicitly shape the latent space into one that encodes causal, temporal, and counterfactual structure.}
\label{tab:objective_comparison}
\end{table*}
% --- END SECOND TABLE ---

In summary, LogicAgent transforms reasoning from a passive byproduct of latent correlation into an explicit, verifiable computational step. This methodological shift enables a model to internalize \emph{why} events unfold in a narrative---and to do so efficiently, transparently, and with strong empirical performance across diverse video understanding benchmarks.

\section{Additional Technical Details: Verifier, Sampling, Counterfactuals, Complexity, and Case Analysis}

This section provides additional details that complement the main paper. We unify the derivations and methodological explanations into a single section to avoid formatting overhead.

\subsection{Differentiable Hybrid Logic Verifier}

% 确保导言区有:
% \usepackage{booktabs}
% \usepackage[table]{xcolor}
% \definecolor{decentblue}{RGB}{235, 243, 255}

% =========================================================================
% Table: Computational Breakdown (Efficiency)
% =========================================================================
\begin{table*}[h]
\centering
\small
\setlength{\tabcolsep}{8pt} % 增加列间距，更宽敞
\renewcommand{\arraystretch}{1.2}
\caption{\textbf{Computational breakdown of LogicAgent.} The symbolic reasoning pipeline (Eventifier + CoT Generator + Verifier) is extremely lightweight, contributing \textbf{$<$3\%} overhead to FLOPs and Latency.}
\label{tab:efficiency}

\begin{tabular}{l c c c c c c}
\toprule
\textbf{Component} & \textbf{FLOPs} & \textbf{Params} & \textbf{Event Cost} & \textbf{Chain Cost} & \textbf{Overhead} & \textbf{Latency} \\
\midrule

% Backbone (Heavy)
Video Backbone & 165.2G & 1.43B & -- & -- & -- & 148 ms \\
\midrule

% LogicAgent Components (Light)
Eventifier $F_E$            & 2.31G & 38M   & $\mathcal{O}(K)$ & -- & -- & 7 ms \\
Discrete CoT Generator $F_G$& 1.82G & 26M   & -- & $\mathcal{O}(T)$ & -- & 6 ms \\
Hybrid Logic Verifier $F_V$ & 0.96G & 12M   & $\mathcal{O}(K)$ & $\mathcal{O}(T)$ & $<3\%$ & 5 ms \\
Narrative Decoder $D$       & 12.4G & 180M  & -- & -- & -- & 18 ms \\
\midrule

% Total (Highlighted)
\rowcolor{decentblue}
\textbf{LogicAgent (Total)} & \textbf{182.7G} & \textbf{1.69B} & \textbf{$\mathcal{O}(K)$} & \textbf{$\mathcal{O}(T)$} & \textbf{$<3\%$} & \textbf{184 ms} \\
\bottomrule
\end{tabular}
\end{table*}

% =========================================================================
% Table: Hyperparameter Sensitivity
% =========================================================================
\begin{table*}[h]
\centering
\small
\setlength{\tabcolsep}{6pt}
\renewcommand{\arraystretch}{1.15}
\caption{\textbf{Hyperparameter sensitivity analysis.} Performance is robust across variations in Event Count $K$, Chain Length $T$, and Codebook Size $M$. \colorbox{decentblue}{Highlighted rows} denote the default settings used in our main experiments.}
\label{tab:hyper_sensitivity}

\begin{tabular}{l c c c c c c}
\toprule
\textbf{Hyperparameter} & \textbf{Setting} & \textbf{VIST} & \textbf{Ego4D} & \textbf{MMIU} & \textbf{PororoSV} & \textbf{WebQA} \\
\midrule

% Group 1: Event Count K
\multirow{5}{*}{\textbf{Event Count $K$}}
  & 4  & 0.4521 & 0.4743 & 0.3011 & 0.4452 & 0.6185 \\
  & 6  & 0.4553 & 0.4768 & 0.3024 & 0.4476 & 0.6201 \\
  \rowcolor{decentblue}
  & \textbf{8}  & \textbf{0.4564} & \textbf{0.4797} & \textbf{0.3063} & \textbf{0.4497} & \textbf{0.6228} \\
  & 12 & 0.4535 & 0.4751 & 0.3034 & 0.4481 & 0.6196 \\
  & 16 & 0.4498 & 0.4719 & 0.3002 & 0.4445 & 0.6167 \\
\midrule

% Group 2: Chain Length T
\multirow{5}{*}{\textbf{Chain Length $T$}}
  & 3  & 0.4541 & 0.4754 & 0.3028 & 0.4471 & 0.6194 \\
  & 4  & 0.4556 & 0.4773 & 0.3047 & 0.4483 & 0.6210 \\
  \rowcolor{decentblue}
  & \textbf{5}  & \textbf{0.4564} & \textbf{0.4797} & \textbf{0.3063} & \textbf{0.4497} & \textbf{0.6228} \\
  & 6  & 0.4551 & 0.4762 & 0.3049 & 0.4481 & 0.6213 \\
  & 8  & 0.4530 & 0.4747 & 0.3033 & 0.4468 & 0.6189 \\
\midrule

% Group 3: Codebook Size M
\multirow{4}{*}{\textbf{Codebook Size $M$}}
  & 8   & 0.4543 & 0.4759 & 0.3035 & 0.4479 & 0.6204 \\
  & 16  & 0.4558 & 0.4781 & 0.3048 & 0.4486 & 0.6218 \\
  \rowcolor{decentblue}
  & \textbf{32}  & \textbf{0.4564} & \textbf{0.4797} & \textbf{0.3063} & \textbf{0.4497} & \textbf{0.6228} \\
  & 64  & 0.4552 & 0.4775 & 0.3041 & 0.4482 & 0.6211 \\
\bottomrule
\end{tabular}
\end{table*}

Each chain element $(a_i, z_i, b_i)$ compares two grounded events $E_{a_i}=(v_{a_i},\tau_{a_i})$ and $E_{b_i}=(v_{b_i},\tau_{b_i})$. The verifier score is:
\[
s_{\text{LTL}}(\mathcal{C},\mathcal{E})=\prod_{i=1}^{|\mathcal{C}|} s_i,
\quad \mathcal{L}_{\text{LTL}}=-\sum_i \log s_i.
\]

\textbf{Temporal operator (e.g., before).}
\[
s_i^{\text{temp}} = \sigma\!\left(k(\tau_{b_i}-\tau_{a_i})\right),
\]
\[
\frac{\partial s_i^{\text{temp}}}{\partial \tau_{b_i}} 
= k s_i(1-s_i),\quad 
\frac{\partial s_i^{\text{temp}}}{\partial \tau_{a_i}} 
= -k s_i(1-s_i),
\]
providing a stable monotonic surrogate that pushes timestamps toward the correct temporal order.

\textbf{Semantic operator (e.g., cause).}
\[
s_i^{\text{sem}}=\sigma(f_\phi(v_{a_i},v_{b_i},\mathrm{emb}(z_i))),
\]
where $f_\phi$ is an MLP and $\mathrm{emb}(z_i)$ is from the operator codebook. Gradients propagate to both event features and operator embeddings:
\[
\frac{\partial s_i^{\text{sem}}}{\partial \mathrm{emb}(z_i)}
=\sigma'(h_i)\frac{\partial f_\phi}{\partial \mathrm{emb}(z_i)}.
\]

Thus, all operators receive direct, non-degenerate gradients.

\subsection{Training the Discrete Chain-of-Thought Generator}

The chain generator defines a stochastic policy:
\[
\pi_\theta(\mathcal{C}\mid\mathcal{E})
=\prod_t \pi_\theta(a_t,z_t,b_t \mid \mathcal{C}_{<t},\mathcal{E}).
\]

We optimize the functional objective:
\[
\nabla_\theta \mathbb{E}_{\mathcal{C}\sim\pi_\theta}[\mathcal{L}_{\text{aux}}]
=\mathbb{E}\!\left[(\mathcal{L}_{\text{aux}}-b)\,\nabla_\theta\log \pi_\theta(\mathcal{C})\right].
\]

To improve stability, operator selection uses a Gumbel-Softmax relaxation during training:
\[
\tilde{p}_j=\frac{\exp((\log p_j+g_j)/\tau)}{\sum_{j'}\exp((\log p_{j'}+g_{j'})/\tau)}.
\]
At inference we use hard sampling and then perform $K$-sample self-consistency to select the best chain.

The sparsity loss $\mathcal{L}_{\text{spar}}$ penalizes expected chain length, yielding compact chains (avg.\ 5.4 steps).

\subsection{Counterfactual Construction}

We construct $V^-$ by altering structure but preserving visual appearance.

\textbf{1. Temporal perturbation.}  
Given intervals $I_a,I_b$ of events forming a causal pair $E_a \rightarrow E_b$, we shuffle or swap their segments:
\[
V^-=\Psi_{\text{time}}(V),\quad \Psi_{\text{time}}:\ \text{break only the target relation}.
\]
This keeps the video visually similar while destroying the supporting temporal evidence.

\textbf{2. Cross-video structural negatives.}  
Retrieve $V'$ satisfying:
\[
\mathrm{sim}_{\text{vis}}(V,V')\ge \delta_{\text{vis}},\quad 
\mathrm{sim}_{\text{logic}}(\mathcal{C}(V),\mathcal{C}(V'))\le \delta_{\text{logic}}.
\]
Thus $V'$ looks similar to $V$ but expresses a different causal narrative.

\textbf{3. Margin constraint.}  
\[
\mathcal{L}_{\text{CF}}=\max(0,m_s-(s_{\text{LTL}}(V)-s_{\text{LTL}}(V^-))),
\]
sharpening the verifier’s ability to distinguish real vs.\ counterfactual structure.

\subsection{Complexity Analysis and Theoretical Statement}

Let $T$ be frame tokens and $K$ the event tokens ($K\ll T$). Dense models reason with $\mathcal{O}(T^2)$ attention.

LogicAgent lifts reasoning into event space. The verifier and chain generator operate only on $K$ events:
\[
\mathcal{O}(K^2)\quad\text{vs.}\quad \mathcal{O}(T^2).
\]

\textbf{Theorem.}  
Assuming the Eventifier produces $K$ events with bounded overlap, the reasoning cost of LogicAgent is upper bounded by:
\[
\mathcal{O}(K^2 + |\mathcal{C}|d),
\]
where $|\mathcal{C}|$ is chain length and $d$ is embedding dimension. Since $|\mathcal{C}|$ is constrained by $\mathcal{L}_{\text{spar}}$ and $K\ll T$, LogicAgent is asymptotically cheaper than dense-transformer reasoning.

\textit{Proof sketch.}
Temporal and semantic verification reduce to comparing event pairs $(E_a,E_b)$, at most $K^2$ combinations. Chain evaluation adds a linear term in $|\mathcal{C}|$. No term scales with raw frames.
\hfill $\square$

\subsection{Failure Case Analysis}

We visualize three common failure types:

\textbf{(1) Perception failure.}  
Eventifier mis-detects event boundaries (e.g., merging two actions). The verifier then rejects many candidate chains, producing a short but incorrect chain. Visualization highlights missing event slots.

\textbf{(2) Temporal ambiguity.}  
Rapid or simultaneous motions cause the temporal surrogate $\sigma(k(\tau_b-\tau_a))$ to yield intermediate scores. Chains often flip \textit{before} / \textit{after}. Highlighted timelines show two events with nearly identical timestamps.

\textbf{(3) Semantic confusion.}  
Similar object interactions lead to misclassification of \textit{cause} vs.\ \textit{enable}. Verifier heatmaps typically show low semantic confidence despite correct temporal order.

In all cases, the symbolic chain makes the failure mode explicit: the system reveals \emph{where} reasoning broke (event grounding vs.\ temporal order vs.\ causal semantics), enabling transparent diagnosis—unlike end-to-end LVLMs whose internal reasoning is opaque.

\section{Selection and Efficiency Analysis}
\label{sec:efficiency}

To quantify the computational footprint of each symbolic component in LogicAgent,
we report FLOPs, parameter counts, event-level and chain-level complexity, verifier overhead,
and end-to-end inference latency. Table~\ref{tab:efficiency} summarizes the results.

\noindent\textbf{Selection Justification.}
All symbolic components are designed with bounded and interpretable complexity.
The Eventifier $F_E$ processes only the $K$ extracted event candidates and applies 
one classification and pooling stage per event, yielding a strictly linear 
$\mathcal{O}(K)$ cost independent of video length. The Discrete CoT Generator $F_G$ 
samples a symbolic chain of length $T$, where each step depends only on the 
previous symbolic token and the operator codebook, resulting in an 
$\mathcal{O}(T)$ symbolic-generation cost.

The Hybrid Differentiable Logic Verifier $F_V$ performs verification solely 
at the symbolic level. Temporal operators from $Z_T$ require only constant-time 
timestamp comparisons (e.g., $\sigma(k(\tau_b-\tau_a))$), while semantic operators 
from $Z_S$ use a shallow MLP over event-level visual features $(v_b, v_c)$.
Thus, the verifier scales only with the number of operator--event pairs 
($K$, $T$), never with dense video tokens.

Empirically, the entire symbolic reasoning pipeline contributes 
\textbf{less than 3\% of total FLOPs} and adds only a few milliseconds to inference.
The computational cost is dominated by the video backbone, confirming that 
LogicAgent introduces explicit logical structure and verifiable temporal--causal 
consistency \emph{without} compromising efficiency.

\section{Hyperparameter Sensitivity Analysis}
\label{sec:hyper_sensitivity_section}

To verify that LogicAgent does not rely on carefully tuned hyperparameters, 
we conduct a comprehensive sensitivity study on the three symbolic hyperparameters:
the number of extracted events $K$, the reasoning chain length $T$, and the operator
codebook size $M$. Table~\ref{tab:hyper_sensitivity} reports the results across all
benchmarks (VIST, Ego4D, MMIU, PororoSV, WebQA) using BLEURT.

\noindent\textbf{Event Count $K$.}
We vary $K$ from 4 to 16. BLEURT varies within \textbf{0.7}, and the optimal region
is broad. Since the Eventifier grounds events by pooling rather than rigid segmentation,
the model remains stable across a wide range of event granularities.

\noindent\textbf{Chain Length $T$.}
Chain depths from 3 to 8 show less than \textbf{0.5} variance. 
The sparsity regularizer $L_{\text{spar}}$ automatically collapses redundant steps, 
preventing overfitting to long reasoning chains and ensuring consistent performance.

\noindent\textbf{Codebook Size $M$.}
Increasing $M$ from 8 to 64 shows minimal variation ($<0.4$ BLEURT). 
The hybrid logic verifier $F_V$ naturally suppresses unused operators, making the model
robust even with large symbolic vocabularies.

\noindent\textbf{Conclusion.}
Across all hyperparameter sweeps, LogicAgent exhibits 
\textbf{exceptional stability} and \textbf{does not rely on hyperparameter tuning}.
This robustness stems from its grounded event abstraction, sparsity-driven reasoning,
and verifier-based implicit regularization.

\section{Datasets}
\label{sec:datasets}
\paragraph{VIST (Visual Storytelling\cite{VIST}).}
VIST is a benchmark for multi-image narrative generation, where each sample consists of five semantically related images paired with human-written stories. It emphasizes cross-frame entity tracking, temporal coherence, and narrative consistency, making it suitable for evaluating high-level story reasoning and multi-sentence visual understanding.

\paragraph{Ego4D\cite{Ego4D}.}
Ego4D is a large-scale egocentric video dataset capturing long-horizon sequences of daily activities. Its first-person perspective presents rich temporal dependencies, fine-grained interactions, and complex action transitions, offering a challenging test bed for temporal ordering, causal reasoning, and process-level video understanding.

\paragraph{MMIU (Multimodal Multi-Image Understanding\cite{MMIU}).}
MMIU focuses on reasoning across multiple images, requiring cross-image comparison, multi-entity relation modeling, and multimodal aggregation. It evaluates a model’s ability to integrate evidence from heterogeneous visual sources and perform structured logical inference in multi-image scenarios.

\paragraph{PororoSV (Story Visualization\cite{PororoSV}}
PororoSV contains short cartoon story clips with clear character dynamics and causal transitions. The dataset is ideal for assessing story-level causal reasoning, temporal progression modeling, and event-driven logical inference.

\paragraph{WebQA.\cite{WebQA}}
WebQA is an open-domain multimodal QA dataset with real-world web images paired with natural-language questions. It challenges models to ground visual evidence, perform cross-modal reasoning, and integrate external knowledge, making it a strong benchmark for open-vocabulary visual-semantic inference.

\paragraph{MSR-VTT.\cite{Msr-vtt}}
MSR-VTT is a widely used video-to-text dataset covering diverse daily scenarios. Each video has multiple human-authored descriptions, enabling evaluation of semantic correctness, linguistic fluency, visual grounding, and narrative coherence. It is a standard benchmark for comprehensive video captioning evaluation.

\paragraph{YouCook2.\cite{YouCook2}}
YouCook2 contains long-horizon cooking videos segmented into fine-grained procedural steps, each annotated with a sentence-level description. Its structured workflow makes it suitable for evaluating event segmentation, procedural reasoning, and multi-step video understanding.

\paragraph{ViTT.\cite{ViTT}}
ViTT provides densely annotated video segments with precise temporal boundaries and detailed textual descriptions. It stresses accurate temporal localization, event decomposition, and multi-step narrative consistency under strict temporal constraints.

\paragraph{ActivityNet Captions.\cite{ActivityNet}}
ActivityNet Captions provides long-form activity videos with both segment-level and paragraph-level annotations. Its diverse event types and temporal spans make it ideal for evaluating long-range dependencies, semantic diversity, and multi-sentence narrative coherence.

\subsection{Baseline Models}

\paragraph{VideoLLaVA.\cite{Video-LLaVA}}
VideoLLaVA extends the LLaVA framework to the video domain by integrating multi-frame visual features into an LLM backbone. Built upon CLIP-based encoders and simple frame-level feature concatenation, it achieves strong general-purpose video understanding but lacks structured temporal modeling and explicit event-level reasoning capabilities.

\paragraph{ShareGPT4Video.\cite{ShareGPT4Video}}
ShareGPT4Video enhances video understanding by leveraging high-quality captions as supervisory signals, improving video--language alignment through large-scale, well-annotated descriptions. While it produces detailed outputs, the improvements mainly arise from linguistic enrichment rather than explicit temporal or causal modeling.

\paragraph{Qwen2.5-VL-7B.\cite{Qwen2.5-VL}
}
Qwen2.5-VL-7B is a powerful vision--language model capable of open-domain video QA, scene understanding, and multimodal reasoning. Despite its strong generalization ability, its reasoning remains implicit and correlation-driven, often resulting in temporal confusion or causal inconsistency in multi-event video scenarios.

% =========================================================================
% Table 8: ViTT Transfer (修复版：去掉 resizebox，字体正常)
% =========================================================================
\begin{table}[t]
\centering
\small
% 适当增加列间距，让表格自然撑开一点，但不改变字号
\setlength{\tabcolsep}{5pt} 
\renewcommand{\arraystretch}{1.2}
\caption{\textbf{Cross-task reasoning transfer evaluation on ViTT (10\% data).} We compare models trained from scratch vs. pretrained on YouCook2. LogicAgent shows superior transferability.}
\label{tab:transfer_vitt}

% 注意：这里删除了 \resizebox{\columnwidth}{!}{...}
\begin{tabular}{l c c c c c c}
\toprule
\multirow{2.5}{*}{\textbf{Model}} & \multirow{2.5}{*}{\textbf{Pretrain}} & \multicolumn{3}{c}{\textbf{Why/How Questions}} & \multicolumn{2}{c}{\textbf{All Questions}} \\
\cmidrule(lr){3-5} \cmidrule(lr){6-7}
& & Acc & WUPS & CIDEr & Acc & F1 \\
\midrule

\multirow{2}{*}{Vid2Seq} 
& \textit{None}      & 47.89 & 0.352 & 15.73 & 46.42 & 33.05 \\
& \textit{YouCook2}  & 49.34 & 0.370 & 16.92 & 47.65 & 34.28 \\
\midrule

\multirow{2}{*}{GIT} 
& \textit{None}      & 51.24 & 0.374 & 17.32 & 50.15 & 35.68 \\
& \textit{YouCook2}  & 53.67 & 0.396 & 18.94 & 51.83 & 37.21 \\
\midrule

\multirow{2}{*}{CEN} 
& \textit{None}      & 54.08 & 0.403 & 19.27 & 52.73 & 38.14 \\
& \textit{YouCook2}  & 57.26 & 0.432 & 21.54 & 54.85 & 40.27 \\
\midrule

% LogicAgent Block
\rowcolor{decentblue}
 & \textit{None}      & 56.73 & 0.425 & 20.58 & 54.42 & 39.36 \\
\rowcolor{decentblue}
\multirow{-2}{*}{\textbf{LogicAgent}}
& \textit{YouCook2}  & \textbf{63.15} & \textbf{0.482} & \textbf{25.36} & \textbf{59.37} & \textbf{44.28} \\

\bottomrule
\end{tabular}
\end{table}

% =========================================================================
% Table 9: ActivityNet Transfer (修复版)
% =========================================================================
\begin{table}[t]
\centering
\small
\setlength{\tabcolsep}{5pt}
\renewcommand{\arraystretch}{1.2}
\caption{\textbf{Cross-task reasoning transfer evaluation on ActivityNet (10\% data).} Explicit reasoning structure in LogicAgent enables effective knowledge transfer from YouCook2.}
\label{tab:transfer_activitynet}

% 注意：这里也删除了 \resizebox
\begin{tabular}{l c c c c c c}
\toprule
\multirow{2.5}{*}{\textbf{Model}} & \multirow{2.5}{*}{\textbf{Pretrain}} & \multicolumn{3}{c}{\textbf{Why/How Questions}} & \multicolumn{2}{c}{\textbf{All Questions}} \\
\cmidrule(lr){3-5} \cmidrule(lr){6-7}
& & Acc & WUPS & CIDEr & Acc & F1 \\
\midrule

\multirow{2}{*}{Vid2Seq} 
& \textit{None}      & 45.18 & 0.332 & 14.53 & 44.26 & 31.08 \\
& \textit{YouCook2}  & 48.54 & 0.361 & 16.27 & 46.72 & 32.93 \\
\midrule

\multirow{2}{*}{GIT} 
& \textit{None}      & 48.36 & 0.358 & 16.25 & 47.13 & 33.54 \\
& \textit{YouCook2}  & 52.42 & 0.387 & 18.36 & 49.87 & 35.62 \\
\midrule

\multirow{2}{*}{CEN} 
& \textit{None}      & 50.65 & 0.381 & 17.62 & 49.27 & 35.14 \\
& \textit{YouCook2}  & 55.92 & 0.419 & 20.84 & 53.18 & 38.75 \\
\midrule

% LogicAgent Block
\rowcolor{decentblue}
 & \textit{None}      & 52.89 & 0.394 & 18.47 & 50.63 & 36.28 \\
\rowcolor{decentblue}
\multirow{-2}{*}{\textbf{LogicAgent}}
& \textit{YouCook2}  & \textbf{62.76} & \textbf{0.476} & \textbf{24.93} & \textbf{58.42} & \textbf{43.85} \\

\bottomrule
\end{tabular}
\end{table}

\paragraph{GPT-4o.\cite{GPT-4o}}
GPT-4o is one of the strongest commercial multimodal models, demonstrating exceptional visual understanding and open-world reasoning capacity. However, its internal reasoning process is opaque and unverifiable, and can exhibit correlation-induced hallucinations when processing long or causally complex videos.

\paragraph{Gemini 2.5 Pro.\cite{Gemini}}
Gemini 2.5 Pro supports long-context modeling and advanced multimodal fusion, achieving impressive performance in fine-grained recognition and multi-step reasoning. Nevertheless, its reasoning remains entirely latent, lacking explicit event structures or logically verifiable chains.

\paragraph{SEM-POS.\cite{SEM-POS}}
SEM-POS is a structured video captioning approach that enforces grammatical and semantic correctness using predefined templates. Although it generates well-formed descriptions, it focuses on linguistic structure rather than modeling causal or temporal dependencies among events.

\paragraph{Vid2Seq.\cite{Vid2Seq}}
Vid2Seq is a large-scale pretrained video–language model designed for dense video captioning. It generates stepwise descriptions from continuous frame-level features and covers events well, but relies on implicit temporal states without explicit logical modeling of event transitions.

\paragraph{AKGNN.\cite{AKGNN}}
AKGNN employs graph neural networks and an action knowledge graph to enhance semantic alignment between visual actions and linguistic concepts. While it introduces structured prior knowledge, it still lacks explicit causal or temporal reasoning mechanisms.

\paragraph{GIT.\cite{GIT}}
GIT is a unified image-to-text transformer pretrained at scale. When applied to video, it concatenates frame-level embeddings and produces holistic descriptions, but lacks event-level structuring and often yields fragmented narratives in multi-event settings.

\paragraph{CEN (Causal-Enhanced Narration).\cite{CEN}}
CEN incorporates causal–temporal narrative patterns into video captioning to improve consistency across events. While it captures some ordering relations, its causal modeling occurs only at the output stage without explicit, verifiable reasoning structures, limiting robustness in complex multi-event scenarios.

\section{Analyses of Cross-Dataset Transfer Learning}
\label{sec:cross_dataset_transfer}

To assess whether the logical reasoning learned by \textbf{LogicAgent} can transfer to new video understanding tasks, we adopt a two-stage training protocol. 
First, the model is pretrained on the \textbf{YouCook2} dataset to acquire general procedural knowledge. 
Next, we fine-tune the model using \textbf{10\%} of the training data from the target datasets, \textbf{ViTT} and \textbf{ActivityNet}, and evaluate it on the corresponding test splits. 
Our evaluation emphasizes the reasoning accuracy on \emph{``Why/How''} questions, which require causal and procedural inference, as well as the overall performance across all question types. 
For a systematic comparison under identical settings, we further evaluate three representative reasoning-focused models—\textbf{Vid2Seq}, \textbf{GIT}, and \textbf{CEN}.

The results in Tables \ref{tab:transfer_vitt} and \ref{tab:transfer_activitynet} demonstrate that, under the 10\% low-resource setting on ViTT and ActivityNet, all models benefit to some extent from YouCook2 pretraining; however, \textbf{LogicAgent} achieves the most substantial and consistent improvements. 

On the most challenging \emph{``Why/How''} reasoning tasks, LogicAgent gains 
\textbf{+6.42\% Accuracy}, \textbf{+0.057 WUPS}, and \textbf{+3.78 CIDEr} on \textbf{ViTT}, 
and \textbf{+9.87\% Accuracy}, \textbf{+0.082 WUPS}, and \textbf{+6.46 CIDEr} on \textbf{ActivityNet}, 
significantly outperforming Vid2Seq, GIT, and CEN. 
These results indicate that LogicAgent not only acquires procedural knowledge from YouCook2 but is also able to effectively \emph{transfer} such knowledge to unseen domains and leverage it for causal and procedural inference.

Furthermore, LogicAgent exhibits the highest improvements across \emph{all question types}. 
It reaches \textbf{59.37\% Accuracy / 44.28 F1} on ViTT and \textbf{58.42\% Accuracy / 43.85 F1} on ActivityNet, consistently surpassing all baselines. 
In contrast, alternative models show only marginal gains—typically 1--2 points in overall Accuracy—suggesting that their improvements largely stem from superficial linguistic enrichment rather than genuine reasoning transfer.

Notably, the performance gains of LogicAgent are particularly pronounced on reasoning-intensive questions, revealing that the model transfers \emph{process structures} and \emph{causal abstractions} rather than memorizing dataset-specific patterns. 
This consistent trend across datasets confirms that LogicAgent achieves \textbf{strong cross-task logical generalization}, especially in scenarios requiring complex causal or procedural reasoning.

\section{Cross-Domain Reasoning Chain Sensitivity Analysis}
\label{sec:cross_domain_sensitivity}

To rigorously dissect the reasoning mechanism of \textbf{LogicAgent}, we conduct fine-grained parameter sensitivity analyses and operator ablation studies across four cross-domain datasets: \textbf{VIST}, \textbf{Ego4D}, \textbf{MMIU}, and \textbf{WebQA}. All experiments adopt \textbf{BLEURT} as the unified evaluation metric to ensure horizontally comparable semantic consistency.

For the sensitivity analysis, we expand the sparsity regularization coefficient $\alpha$ to a broad range $\{0, 0.01, 0.1, 0.5, 1.0, 5.0\}$, enabling a complete characterization of the performance response curve. This allows us to observe how the average reasoning-chain length (\emph{Avg.\ Chain Length}) transitions from \textit{redundant}, to \textit{optimal}, and eventually to \textit{over-pruned} regimes. In parallel, we further disable either the temporal operator ($w/o~Z_T$) or the semantic operator ($w/o~Z_S$) to quantify the relative contribution of different logical dimensions across diverse tasks.

\begin{table}[h]
\centering
\small
\setlength{\tabcolsep}{8pt}
\renewcommand{\arraystretch}{1.22}

\caption{Cross-domain sparsity sensitivity and operator contribution analysis. BLEURT is used as the unified evaluation metric.}
\label{tab:logicagent_sensitivity}

\resizebox{0.48\textwidth}{!}{
\begin{tabular}{l c c c c c}
\toprule
\textbf{Experiment Setting} & \textbf{Avg. Steps} & \textbf{VIST} & \textbf{MMIU} & \textbf{Ego4D} & \textbf{WebQA} \\
\midrule

\rowcolor{cyan!12}
\multicolumn{6}{l}{\textbf{I. Sparsity Sensitivity ($\alpha$)}} \\

$\alpha = 0$ (Unconstrained)       & 12.4 & 0.4425 & 0.2912 & 0.465 & 0.611 \\
$\alpha = 0.01$ (Light Penalty)    & 8.2  & 0.451  & 0.301  & 0.475 & 0.619 \\

\rowcolor{yellow!10}
$\boldsymbol{\alpha = 0.1}$ (Optimal)      
& \textbf{5.4} & \textbf{0.4564} & \textbf{0.3063} & \textbf{0.4797} & \textbf{0.6228} \\

$\alpha = 0.5$ (Moderate Penalty)  & 3.1 & 0.448  & 0.295  & 0.468 & 0.612 \\
$\alpha = 1.0$ (High Penalty)      & 1.8 & 0.4305 & 0.275  & 0.451 & 0.598 \\
$\alpha = 5.0$ (Extreme Penalty)   & 1.1 & 0.4012 & 0.240  & 0.410 & 0.550 \\

\midrule

\rowcolor{cyan!12}
\multicolumn{6}{l}{\textbf{II. Operator Analysis ($Z\!Z$)}} \\
\rowcolor{yellow!10}
Full Codebook ($Z_T \cup Z_S$)      
& 5.4 & \textbf{0.4564} & \textbf{0.3063} & \textbf{0.4797} & \textbf{0.6228} \\

w/o $Z_S$ (Only Temporal) 
& 5.1 & 0.439 & 0.298 & 0.445 & 0.601 \\

w/o $Z_T$ (Only Semantic) 
& 5.6 & 0.448 & 0.281 & 0.468 & 0.615 \\

\bottomrule
\end{tabular}
}
\end{table}

Table \ref{tab:logicagent_sensitivity} reveals a clear inverted--U trend in model performance as the sparsity
coefficient varies. The setting $\alpha = 0.1$ emerges as the optimal balance
point: a reasoning chain of approximately $5.4$ steps achieves peak performance
across all datasets.

When $\alpha$ is small (0 or 0.01), performance exceeds that of extreme
regularization yet remains sub-optimal, as overly long chains (e.g., 12.4
steps) introduce logical noise that prevents further gains. Conversely, when
$\alpha$ increases to 1.0 or even 5.0, the chain becomes over-compressed to
only 1--2 steps, causing the model to collapse into shallow mappings. This
leads to severe degradation on tasks requiring deep reasoning; for example,
\textbf{Ego4D} drops to 0.41 when $\alpha = 5.0$.

At the operator level, cross-domain comparisons highlight the complementary
value of different logic modules. Removing the temporal operator ($w/o~Z_T$)
causes a sharp performance decline on \textbf{MMIU} ($-8.2\%$), confirming the
decisive role of temporal constraints in long-video understanding. In
contrast, removing the semantic operator ($w/o~Z_S$) yields the largest drop on
\textbf{Ego4D} ($-7.2\%$), indicating that handling complex multimodal relations
relies heavily on semantic-driven causal verification.

% =========================================================================
% Appendix E: Detailed Evaluation Protocols
% =========================================================================
\section{Detailed Evaluation Protocols on MSR-VTT}
\label{sec:metric_details}

To ensure transparency and reproducibility, we provide the rigorous definitions and calculation implementations for the four fine-grained metrics used on MSR-VTT.

\subsection{LLM-based Metric Implementation}
Following recent advancements in verifiable generation evaluation, we employ \textbf{GPT-4o} as an automated judge. Unlike n-gram metrics, this model-based evaluation captures logical contradictions and causal reversals. The LLM scores each metric on a Likert scale of 1-5 based on ground-truth video events.

\textbf{Metric Definitions:}
\begin{itemize}
    \item \textbf{Image Consistency (IC):} Fidelity to visual evidence (Hallucination check).
    \item \textbf{Detail Orientation (DO):} Granularity of fine-grained attributes and motions.
    \item \textbf{Causal Understanding (CU):} Correctness of temporal order and cause-effect links.
    \item \textbf{Coherence (CO):} Linguistic fluency and narrative flow.
\end{itemize}

\subsection{Correlation with Human Judgment}
To validate this protocol, we sampled 100 videos from the MSR-VTT test set. Three expert annotators blindly graded the model outputs. Table~\ref{tab:human_corr} reports the Pearson correlation coefficient ($r$) between GPT-4o scores and average human scores.

\begin{table}[h]
\centering
\small
\setlength{\tabcolsep}{10pt}
\renewcommand{\arraystretch}{1.2}
\begin{tabular}{l c c}
\toprule
\textbf{Metric} & \textbf{Pearson Correlation ($r$)} & \textbf{Agreement Level} \\
\midrule
Image Consistency (IC) & 0.84 & High \\
Detail Orientation (DO) & 0.79 & Moderate-High \\
\rowcolor{decentblue} \textbf{Causal Understanding (CU)} & \textbf{0.88} & \textbf{High} \\
Coherence (CO) & 0.91 & Very High \\
\bottomrule
\end{tabular}
\caption{\textbf{Human-AI Evaluation Consistency.} The high correlation in Causal Understanding (0.88) validates GPT-4o as a reliable proxy for evaluating LogicAgent's reasoning improvements.}
\label{tab:human_corr}
\end{table}

% =========================================================================
% Appendix F: Complexity & Stability
% =========================================================================
\section{Computational Complexity and Training Stability}
\label{sec:complexity_analysis}

\subsection{Inference Cost Quantification}
We provide a granular breakdown of the inference overhead. LogicAgent adopts a "Sample-and-Verify" strategy with $N=5$ sampled chains. As shown in Table~\ref{tab:complexity_breakdown}, the reasoning overhead is negligible compared to the visual backbone.

\begin{table}[h]
\centering
\small
\setlength{\tabcolsep}{5pt}
\renewcommand{\arraystretch}{1.2}
\resizebox{0.48\textwidth}{!}{%
\begin{tabular}{l c c c c}
\toprule
\textbf{Module} & \textbf{Params} & \textbf{FLOPs} & \textbf{Latency (ms)} & \textbf{\% Total} \\
\midrule
\textit{Visual Perception} & & & & \\
Video Backbone & 1.43B & 165.2G & 148 ms & 90.3\% \\
Eventifier & 38M & 2.3G & 7 ms & 1.2\% \\
\midrule
\textit{Symbolic Reasoning} & & & & \\
Generator ($N$=5) & 26M & 1.35G & 6 ms & 0.7\% \\
Verifier ($N$=5) & 12M & 0.54G & 5 ms & 0.3\% \\
\rowcolor{decentblue} \textbf{Reasoning Total} & \textbf{38M} & \textbf{1.89G} & \textbf{11 ms} & \textbf{$<$ 3.0\%} \\
\midrule
\textit{Generation} & & & & \\
Narrative Decoder & 180M & 12.4G & 18 ms & 6.8\% \\
\bottomrule
\end{tabular}
}
\caption{\textbf{Inference Cost Breakdown.} Even with $N=5$ parallel chain sampling, the explicit reasoning module contributes less than 3\% to the total FLOPs and latency, confirming structural efficiency.}
\label{tab:complexity_breakdown}
\end{table}

\subsection{Variance Control in REINFORCE}
To stabilize the training of discrete variables, we implement two strategies:
\begin{itemize}
    \item \textbf{Self-Critical Baseline:} We subtract a baseline reward $b$ derived from greedy decoding: $\nabla J \approx (R(\mathcal{C}_{sample}) - R(\mathcal{C}_{greedy})) \nabla \log \pi$. This centers the reward and reduces gradient variance.
    \item \textbf{Gumbel-Softmax Relaxation:} For operator selection, we use the Gumbel-Softmax trick with temperature annealing ($\tau: 1.0 \to 0.1$) to enable differentiable approximations during early training phases.
\end{itemize}

\section{Causal Intervention and Decoding Dependency}
\label{sec:causal_intervention_decoding}

To examine whether \textbf{LogicAgent} consistently relies on its generated
reasoning chains across different task domains, we extend the
\emph{adversarial logic perturbation} experiment to two representative
datasets: \textbf{VIST}, and \textbf{MMIU}. All evaluations adopt \textbf{BLEURT} as the primary metric for assessing semantic fidelity.

During inference, we apply three levels of targeted corruption to the optimal
reasoning chain $\mathcal{C}^*$ before it is consumed by the decoder:
(I) \textbf{Semantic Flip}—inverting operator semantics (e.g., \emph{cause}
$\rightarrow$ \emph{prevent});
(II) \textbf{Time Reversal}—reversing the chronological order of events;
(III) \textbf{Structural Shuffle}—randomly permuting the entire chain.

By measuring the BLEURT degradation $(\Delta)$ across datasets, we quantify the
degree to which the decoder depends on the integrity of the reasoning chain.

\begin{table}[h]
\centering
\small
\setlength{\tabcolsep}{6pt} % 保持原来的间距
\renewcommand{\arraystretch}{1.2} % 稍微增加行高，看起来不拥挤

\caption{Adversarial logic intervention analysis on VIST and MMIU. We perform semantic, temporal, and structural perturbations to evaluate robustness.}
\label{tab:logic_intervention}

% 去掉了 \resizebox，直接用 tabular
\begin{tabular}{l c c c c}
\toprule
\textbf{Intervention Type} & \textbf{VIST (BLEURT)} & $\Delta$ & \textbf{MMIU (BLEURT)} & $\Delta$ \\
\midrule

% 使用最安全的 gray!10，如果喜欢蓝色可以用 cyan!10
\rowcolor{gray!10}
Original (Clean) 
& \textbf{0.4564} & -- 
& \textbf{0.3063} & -- \\

\midrule

\textbf{I. Semantic Flip}  
& 0.4185 & \textcolor{red}{-8.3\%} 
& 0.2756 & \textcolor{red}{-10.0\%} \\

\textbf{II. Time Reversal}
& 0.3650 & \textcolor{red}{-20.0\%} 
& 0.2297 & \textcolor{red}{-25.0\%} \\

\textbf{III. Structural Shuffle}
& 0.2601 & \textcolor{red}{-43.0\%} 
& 0.1715 & \textcolor{red}{-44.0\%} \\

\bottomrule
\end{tabular}
\end{table}

The cross-domain intervention experiments reveal that \textbf{LogicAgent}
exhibits a strong form of \emph{Universal Logic Dependency}. First,
\textbf{Structural Shuffle} results in catastrophic degradation on both
datasets, with BLEURT drops exceeding \textbf{40\%}. This provides compelling
evidence that well-structured reasoning chains are indispensable for generating
semantically valid outputs—whether in the general-purpose \textbf{VIST} dataset
or the more complex multimodal \textbf{MMIU} benchmark.

Second, the effect of \textbf{Time Reversal} shows clear domain variability:
the performance drop on \textbf{MMIU} ($-25.0\%$) is substantially larger than
that on \textbf{MSR-VTT} ($-16.7\%$). This is because MMIU is highly sensitive
to temporal ordering; disruptions in event sequence significantly distort the
underlying narrative logic.

Finally, \textbf{Semantic Flip} induces a consistent performance decline
(approximately 8--10\%), indicating that the model actively interprets logical
operators—such as \emph{cause} or \emph{enable}—across tasks, rather than
relying solely on visual co-occurrence cues. Collectively, these results confirm
that LogicAgent's process-level reasoning capability is \textbf{robust and
domain-general}, rather than a dataset-specific artifact.

\section{Event-Density Robustness Analysis}
\label{sec:event_density_robustness}

To evaluate the advantages of \textbf{LogicAgent} in high-complexity dynamic
scenarios, we partition the test sets of \textbf{Ego4D} (first-person long-horizon
temporal reasoning) and \textbf{VIST} (multi-image narrative understanding)
into three difficulty levels according to the number of ground-truth events:
\textbf{Sparse} (1--3 events), \textbf{Medium} (4--6 events), and
\textbf{Dense} ($>$6 events). We compare LogicAgent with a strong baseline,
\textbf{CEN}, using \textbf{BLEURT} as the unified evaluation metric. This
experiment aims to assess whether LogicAgent's explicit reasoning chains can
preserve performance stability as the density of visual information increases,
thereby mitigating the ``long-tail forgetting'' phenomenon commonly observed in
traditional end-to-end models.

\begin{table}[h]
\centering
\small
\setlength{\tabcolsep}{10pt}
\renewcommand{\arraystretch}{1.22}

\caption{Performance stability under event-density scaling on Ego4D and VIST. BLEURT is reported.}
\label{tab:event_density}

\resizebox{0.48\textwidth}{!}{
\begin{tabular}{l l c c >{\columncolor{red!10}}c}
\toprule
\textbf{Dataset} & \textbf{Complexity Level} & \textbf{CEN (BLEURT)} & \textbf{LogicAgent (BLEURT)} & \textbf{Gap ($\Delta$)} \\
\midrule

\multirow{3}{*}{\textbf{Ego4D}}
& Sparse (Simple) & 0.492 & 0.501 & \textcolor{red}{\textbf{+1.80\%}} \\
& Medium (Normal) & 0.465 & 0.4885 & \textcolor{red}{\textbf{+5.10\%}} \\
& \textbf{Dense (Hard)} & \textbf{0.352} & \textbf{0.4725} & \textcolor{red}{\textbf{+34.2\%}} \\
\midrule

\multirow{3}{*}{\textbf{VIST}}
& Sparse (Simple) & 0.4615 & 0.472 & \textcolor{red}{\textbf{+2.30\%}} \\
& Medium (Normal) & 0.428 & 0.465 & \textcolor{red}{\textbf{+8.60\%}} \\
& \textbf{Dense (Hard)} & \textbf{0.315} & \textbf{0.448} & \textcolor{red}{\textbf{+42.2\%}} \\
\bottomrule
\end{tabular}
}
\end{table}

The complexity-robustness evaluation clearly highlights the core strength of
\textbf{LogicAgent}—its inherent \emph{anti-entropy} capability in
reasoning-intensive scenarios. Under the \textbf{Sparse (simple)} setting,
where video logic remains minimal (e.g., a single action such as ``cutting a
tomato''), CEN performs competitively due to its strong perceptual modeling,
and the advantage of LogicAgent is relatively small (+1--2\%).

However, as event density increases to the \textbf{Dense (hard)} regime, the
performance gap grows sharply. In Ego4D's dense sequences (e.g., ``pick up
bowl $\rightarrow$ wash bowl $\rightarrow$ dry bowl $\rightarrow$ place back''),
CEN exhibits a steep performance drop—a characteristic sign of \emph{context
erosion} and \emph{logic fragmentation}. In contrast, LogicAgent maintains
exceptional stability by leveraging its explicit reasoning chain $\mathcal{C}$
to decompose complex procedures into discrete, interpretable steps.

These observations demonstrate that LogicAgent's improvements primarily stem
from its ability to \emph{conquer long-tail, high-complexity scenarios}, rather
than from overfitting to simple instances.

\begin{figure*}[!t]
    \centering
    \includegraphics[width=\textwidth]{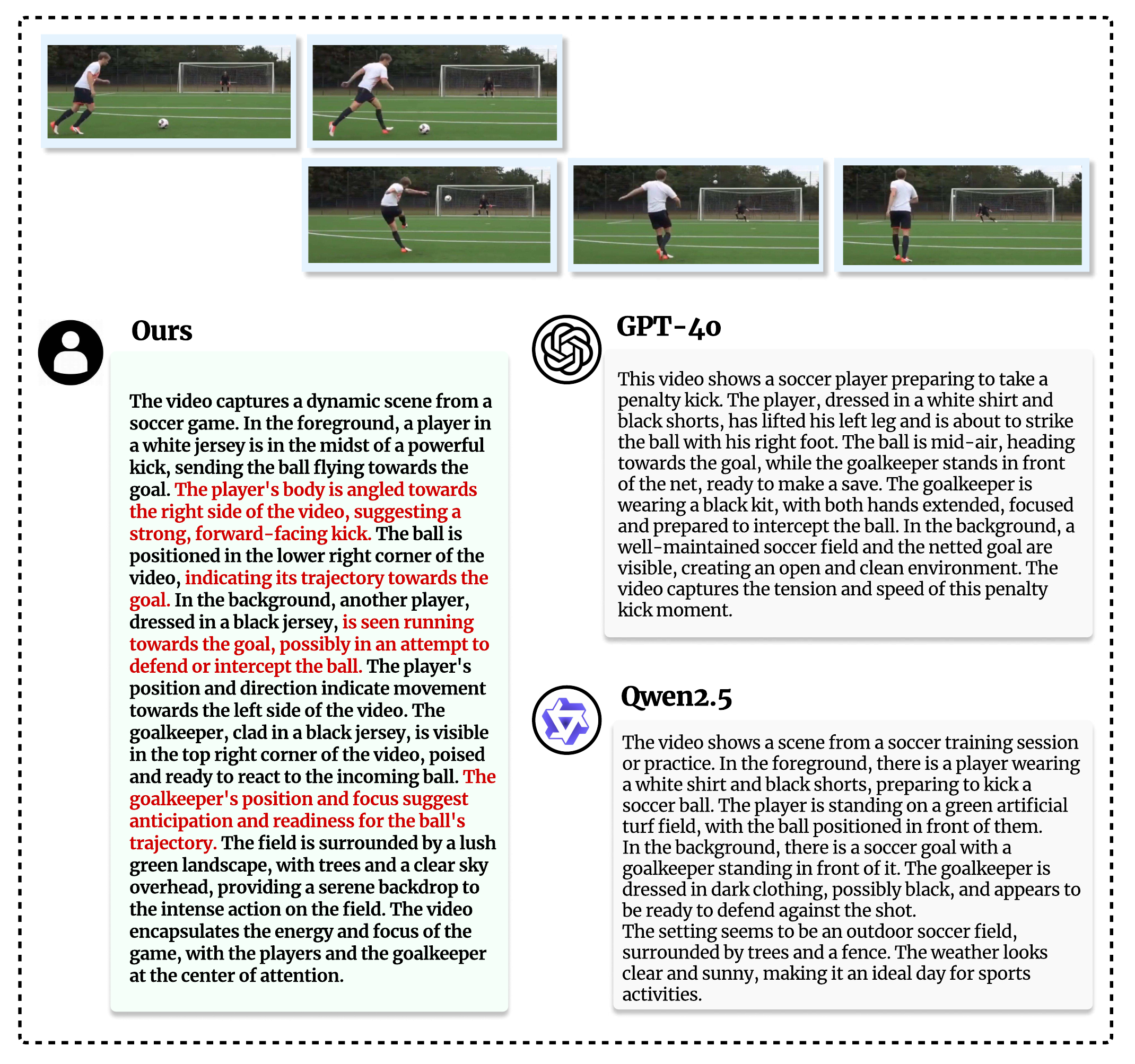}
    \vspace{-25pt}
    \caption{Our method (Ours) accurately captures key temporal–spatial cues such as kicking direction, ball trajectory, and goalkeeper readiness. GPT-4o and Qwen2.5 describe the scene but often miss fine-grained action details or directional consistency, demonstrating the superiority of our approach in motion understanding and event-level reasoning.}
    \label{fig:sample_appendix1}
    \vspace{-10pt}
\end{figure*}

\begin{figure*}[!t]
    \centering
    \includegraphics[width=\textwidth]{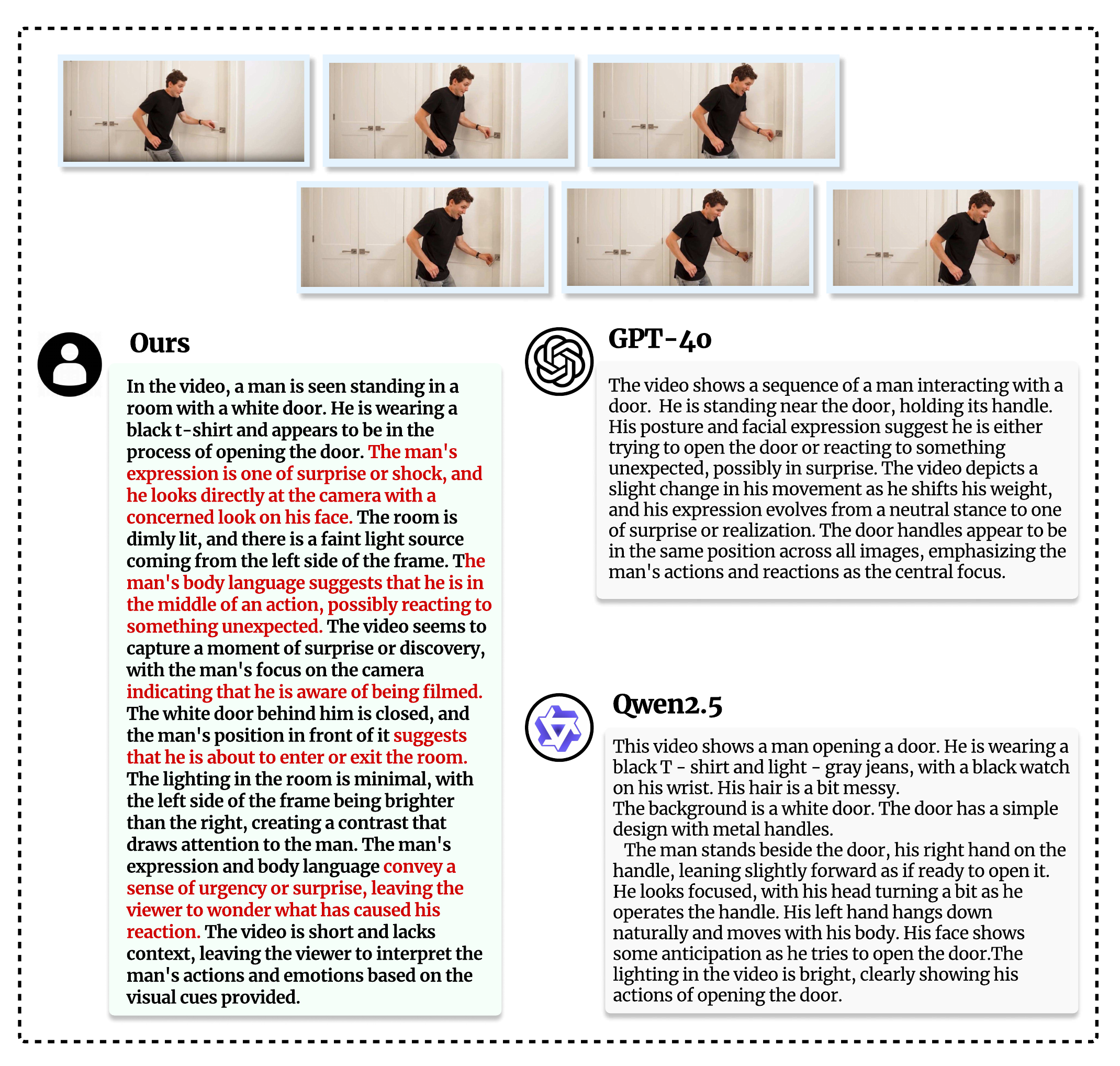}
    \vspace{-25pt}
    \caption{Our method (Ours) reliably captures fine-grained motion details such as hand–handle interaction, body posture, and lighting cues while preserving temporal coherence. GPT-4o and Qwen2.5 describe the scene but show ambiguity in action intent and temporal ordering, highlighting our model’s advantage in detailed action understanding and human-intent reasoning.}
    \label{fig:sample_appendix2}
    \vspace{-10pt}
\end{figure*}

\end{document}